\newtheorem{defn}{Definition}
\newtheorem{thrm}[defn]{Theorem}
\newtheorem{lem}[defn]{Lemma}
\newcommand{\an}{\mathrm{an}}
\newcommand{\de}{\mathrm{de}}
\newcommand{\pa}{\mathrm{pa}}
\newcommand{\ma}{\mathrm{ma}}
\newcommand{\cc}{\mathrm{cc}}
\newcommand{\ngh}{\mathrm{ne}}
\newcommand{\G}{\mathcal{G}}
\newcommand{\U}{\mathcal{U}}
\newcommand{\D}{\mathcal{D}}
\newcommand{\tg}{{<^\D}}
\renewcommand{\H}{\mathcal{R}}
\newcommand{\E}{\mathcal{E}}
\newcommand{\mathdash}{\relbar\mkern-9mu\relbar}
\newcommand{\rightopen}{\mathop{{\mathdash}\!{\circ}}}
\newcommand{\openopen}{\mathop{{\circ}\!{\mathdash}\!{\circ}}}
\newcommand{\at}{\mathrm{at}}
\newcommand{\mt}{\mathrm{mt}}
\begin{document}

\title{A Transformational Characterization of Unconditionally Equivalent Bayesian Networks}

\author{Alex Markham \Email{alex.markham@causal.dev}\\
  \Name{Danai Deligeorgaki} \Email{danaide@kth.se}\\
  \Name{Pratik Misra} \Email{pratikm@kth.se}\\
  \Name{Liam Solus} \Email{solus@kth.se}\\
  \addr Department of Mathematics, KTH Royal Institute of Technology, Sweden}

\maketitle

\begin{abstract}
We consider the problem of characterizing Bayesian networks up to unconditional equivalence, i.e., when directed acyclic graphs (DAGs) have the same set of unconditional \(d\)-separation statements.
Each unconditional equivalence class (UEC) is uniquely represented with an undirected graph whose clique structure encodes the members of the class.
Via this structure, we provide a transformational characterization of unconditional equivalence; i.e., we show that two DAGs are in the same UEC if and only if one can be transformed into the other via a finite sequence of specified moves. 
We also extend this characterization to the essential graphs representing the Markov equivalence classes (MECs) in the UEC.  
UECs partition the space of MECs and are easily estimable from marginal independence tests. 
Thus, a characterization of unconditional equivalence has applications in methods that involve searching the space of MECs of Bayesian networks.
\end{abstract}

\begin{keywords}
  unconditional equivalence; marginal independence structure; undirected graphs; directed acyclic graphs; chain graphs; Markov equivalence.
\end{keywords}

\section{Introduction}
\label{sec:intro}

A central aspect of modern causal discovery methods is the subdivision of the space of DAGs into equivalence classes that constitute the distinct possible representatives of the data-generating distribution.
When the available data is observational, the typical approach is to subdivide DAG space into \emph{Markov equivalence classes (MECs)} and then learn the \emph{essential graph (CPDAG)} representative of the class \citep{AMP97}. 
Popular causal discovery algorithms, such as the \emph{Greedy Equivalence Search (GES)}, move between candidate MECs based on a transformational relation between I-maps known as Meek's Conjecture \citep{meek1997graphical}, which was proven by \cite{chickering2002optimal}.  
This relation between I-maps is an extension of a transformational characterization of DAGs within the same MEC, which states that any two DAGs in the same MEC are connected by a sequence of covered arrow reversals \citep{chickering1995transformational}.
Although GES is known to be consistent, \citet{wille2006low} and \citet{markham2022} observed that lower-order conditional independencies still reflect well the causal structure of the model, while having the advantage of being accurately estimable even with few observations.  
This motivates a generalization of the transformational characterization of MECs underlying GES to \emph{unconditional equivalence classes (UECs)}; i.e., equivalence classes of DAGs defined by the marginal independence relations they encode.  
We provide such a transformational characterization of UECs both in terms of the DAGs they contain and their essential graphs.

\section{Results}
\label{sec:theory}

In the following we let $\D = (V^\D,E^\D)$ denote a directed acyclic graph (DAG) with set of nodes $V^\D$ and edge set $E^\D$ (and likewise \(\U\) for undirected graphs and \(\G\) for mixed graphs).  
When it is clear from context, we write $V$ and $E$ for the nodes and edges of $\D$, respectively. 
We let $D$ denote the skeleton of $\D$. 
Given disjoint $A,B,C\subseteq V$, we write $A\perp_\D B \mid C$ whenever $A$ and $B$ are \(d\)-separated given $C$ in $\D$.
Similarly, we write $A\not\perp_\D B \mid C$ whenever $A$ and $B$ are \(d\)-connected given $C$ in $\D$.  
A \emph{(simple) trek} between nodes $v$ and $w$ of $\D$ is a path $\pi = \langle v_1 = v,\ldots,v_m = w\rangle$ over distinct vertices that contains no \emph{colliders} $v_{i-1} \rightarrow v_i \leftarrow v_{i+1}$ as subpaths.
Two subsets $A$ and $B$ of $V$ are \(d\)-connected given $\emptyset$ if and only if there is a trek between some $v\in A$ and $w\in B$.  
A path $\pi = \langle v_1 ,\ldots,v_m \rangle$ is called \emph{blocked} if it contains a collider $v_{i-1} \rightarrow v_i \leftarrow v_{i+1}$.
Let $\pa_\D(A), \de_\D(A),$ and $\an_\D(A)$ denote the set of parents, descendants and ancestors of $A\subseteq V$ in $\D$, respectively.  
We note that $v\in\de_\D(v)$ and $v\in\an_\D(v)$.  
We say that an ordered pair $(v,w)$ is \emph{implied by transitivity} in $\D$ if $v\in\an_\D(w)\setminus(\{w\}\cup\pa_\D(w))$, and that the edge \(v \rightarrow w\) is implied by transitivity in the resulting graph \(\D_{+ v\rightarrow w}\) in which it is added.
The set of \emph{maximal ancestors} of $A$ in $\D$, denoted $\ma_\D(A)$ is the set of all $v\in \an_\D(A)$ for which $\an_\D(v) = \{v\}$.  
When the DAG $\D$ is understood from context, we simply write $\pa(A), \de(A), \an(A)$, and $\ma(A)$ for these sets.  
A node $v\in V$ is called a \emph{source} node of $\D$ if $\pa_\D(v) = \emptyset$.
It follows that $\ma_\D(V)$ is the collection of source nodes in $\D$.
A collection $\E$ of cliques (i.e., complete subgraphs) of an undirected graph $\U$ is called a \emph{minimal edge clique cover} of $\U$ if every edge of $\U$ is contained in at least one clique in $\E$ and no proper subset of $\E$ satisfies this property.  
An \emph{independent set} of a graph $\U$ is any subset of $V^\U$ in which no two nodes are adjacent.

\subsection{Unconditional Equivalence for Directed Acyclic Graphs}
\label{sec:transf-char-uncond}
Two DAGs $\D = (V,E^\D)$ and $\D' = (V,E^{\D'})$ are \emph{unconditionally equivalent} when for every \(i, j \in V\), we have that \(i \perp_\D j \mid \emptyset\) if and only if \(i \perp_{\D'} j \mid \emptyset \).
\citet[Lemma~5]{markham2022} show that unconditional equivalence is indeed an equivalence relation over ancestral graphs and consequently also over DAGs.
The collection of all DAGs that are unconditionally equivalent to $\D$ is called its \emph{unconditional equivalence class (UEC)}.
The \emph{unconditional dependence graph} of a DAG $\D = (V,E)$ is the undirected graph \(\U^\D = (V,\{\{v,w\} : v\not\perp_{\D} w; \;v, w  \in V\})\).
$\U^\D$ serves as a representative of the UEC of $\D$, analogous to the essential graph of an MEC.
When $\D$ is clear from context, we simply write $\U$ for $\U^\D$. 
We now provide two characterizations of the unconditional dependence graph of a UEC.  

\begin{thrm}
  \label{thm: first characterizations}
  The unconditional dependence graph $\U$ of a DAG $\D$ is equivalent to each of the following:
  \begin{enumerate}
  \item \(\U_1 = (V, \{\{v,w\} : \mathrm{an}_\D(v) \cap \mathrm{an}_\D(w) \not= \emptyset\})\), that is, two distinct nodes share an edge in \(\U\) if and only if they have a common ancestor in \(\D\);
  \item \(\U_2 = (V, \bigcup_{m \in \text{ma}_{\D}(V)} \{\{v, w\} \in \de_\D(m) \times \de_\D(m) : v \not= w \} )\).
  \end{enumerate}
\end{thrm}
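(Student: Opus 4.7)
The plan is to prove the two equalities $\U = \U_1$ and $\U_1 = \U_2$ separately, each by double inclusion of edge sets.

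For $\U = \U_1$, I would invoke the trek characterization of unconditional $d$-connection stated in the preliminaries: $\{v,w\}$ is an edge of $\U$ if and only if there is a (simple) trek between $v$ and $w$ in $\D$. For the inclusion $\U \subseteq \U_1$, I would exploit the fact that a trek contains no colliders, so after orienting each edge of the trek as in $\D$, the arrows on the path switch direction at most once; this forces the existence of a ``peak'' vertex $t$ from which directed subpaths descend to both endpoints, so $t \in \an_\D(v) \cap \an_\D(w)$. For the converse $\U_1 \subseteq \U$, given a common ancestor $t$, I would concatenate directed paths $t \to \cdots \to v$ and $t \to \cdots \to w$; the only subtle point is that these paths may share intermediate vertices, but taking $t'$ to be the \emph{last} vertex they share and truncating yields two directed subpaths from $t'$ that meet only at $t'$, whose concatenation is a simple trek.

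For $\U_1 = \U_2$, the inclusion $\U_2 \subseteq \U_1$ is immediate: any $m \in \ma_\D(V)$ with $v,w \in \de_\D(m)$ is by definition a common ancestor of $v$ and $w$. For $\U_1 \subseteq \U_2$, I would argue that every common ancestor extends upward to a source: given $t \in \an_\D(v) \cap \an_\D(w)$, iteratively choose a parent of the current node starting at $t$; since $\D$ is finite and acyclic, this terminates at some $m$ with $\pa_\D(m) = \emptyset$, which is a source and hence a member of $\ma_\D(V)$ by the observation in the preliminaries. Then $m \in \an_\D(t) \subseteq \an_\D(v) \cap \an_\D(w)$, so $v, w \in \de_\D(m)$ as required.

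The main (and really only) obstacle is the careful handling of the path-concatenation step for $\U_1 \subseteq \U$, since the definition of a trek used here requires distinct vertices along the path; choosing the peak $t'$ to be the last shared vertex of the two chosen directed paths resolves this cleanly. Everything else follows by direct unfolding of the definitions of ancestor, descendant, source, and trek.
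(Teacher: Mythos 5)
Your proposal is correct and follows essentially the same route as the paper: both reduce $\U=\U_1$ to the no-collider structure of a trek (your ``peak'' vertex is the paper's case analysis (i)--(iii)) and both obtain $\U_1=\U_2$ by lifting a common ancestor to a source node. You simply spell out two details the paper leaves implicit---truncating the two directed paths at their last shared vertex to get a simple trek, and the finite-acyclicity argument that every node has a source ancestor---both of which are handled correctly.
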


\begin{proof}
  Two distinct nodes \(v, w\) are adjacent in \(\U\) if and only if they are \(d\)-connected given \(\emptyset\) in \(\D\), i.e., there is a trek between them.
  Such a trek must either be
  (i) directed from \(v\) to \(w\), implying \(v \in \mathrm{an}_\D(w)\), 
  (ii) directed from \(w\) to \(v\), implying \(w \in \mathrm{an}_\D(v)\), or
  (iii) consisting of a node \(c\) with directed paths to both \(v\) and \(w\), implying \(c \in \mathrm{an}_\D(v) \cap \mathrm{an}_\D(w)\).
  This happens if and only if \(\mathrm{an}_\D(v) \cap \mathrm{an}_\D(w) \not= \emptyset\), so \(\U = \U_1\).
  
  We now show that \(\U_2 = \U_1\). 
  Note that if \(\{v,w\}\in\U_1\) then \(v\neq w\) and \(\an_\D(v)\cap\an_\D(w)\neq \emptyset\).
  Since \(\D\) is a DAG, it follows that there exists \(m\in\an_\D(v)\cap\an_\D(w)\) such that  \(m\in \ma_\D(V)\).
  Since \(v\neq w\), it follows that \(\{v,w\}\in\U_2\).
  On the other hand, if \(\{v,w\}\in\U_2\), there exists a node \(m\) such that \(v,w\in\de_\D(m)\).
  It follows that \(m\in\an_\D(v)\cap\an_\D(w)\) and \(\{v,w\}\in\U_1\), which completes the proof.
\end{proof}

Note that Theorem~\ref{thm: first characterizations}~(2) implies that the cliques of a minimal edge clique cover of $\U$ are in bijection with the source nodes of $\D$. 
Namely, given $v\in\ma_\D(V)$, the maximal clique corresponding to $v$ is the clique $C_v := \de_\D(v)$, and $\E^\D :=\{C_v : v\in\ma_\D(V)\}$ is a minimal edge clique cover of $\U$. 
The following lemma will be used.

\begin{lem}
\label{lem:edge-clique}
Suppose $\D$ and $\D^\prime$ are two DAGs in the same UEC with unconditional dependence graph $\U$.  
Then:
\begin{enumerate}
    \item $|\ma_\D(V)| = |\E^\D|$,
    \item $\ma_\D(V)$ is a maximum independent set in $\U$,
    \item $\E^\D = \E^{\D^\prime}$, and 
    \item $|\ma_\D(V)| = |\ma_{\D^\prime}(V)|$.
\end{enumerate}
\end{lem}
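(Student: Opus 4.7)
The plan is to prove the four parts roughly in order, leaning throughout on the remark following Theorem~\ref{thm: first characterizations} that $\E^\D = \{\de_\D(v) : v \in \ma_\D(V)\}$ together with characterization (1) of that theorem.

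For (1), the map $v \mapsto \de_\D(v)$ from $\ma_\D(V)$ onto $\E^\D$ is surjective by definition; I would check injectivity by noting that for distinct $v, w \in \ma_\D(V)$ we have $v \in \de_\D(v)$ but $v \notin \de_\D(w)$ (otherwise $w$ would be a strict ancestor of $v$, contradicting $\an_\D(v) = \{v\}$). Part (2) splits into independence and maximality. Independence is immediate: for distinct sources $v, w$ we have $\an_\D(v) \cap \an_\D(w) = \{v\} \cap \{w\} = \emptyset$, so by Theorem~\ref{thm: first characterizations}(1) they are not adjacent in $\U$. For maximality, I would observe that every vertex $u \in V$ lies in $\de_\D(v)$ for some source ancestor $v \in \ma_\D(V)$, so $\E^\D$ is in fact a \emph{vertex} clique cover of $\U$; since any independent set contains at most one vertex from each clique in the cover, any independent set has size at most $|\E^\D| = |\ma_\D(V)|$ by (1).

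The main difficulty is (3), where we must describe $\E^\D$ intrinsically in terms of $\U$. I propose to prove the characterization
\[
\E^\D = \{\, N_\U[v] : v \in V \text{ and } N_\U[v] \text{ is a maximal clique of } \U \,\},
\]
where $N_\U[v]$ denotes the closed neighborhood of $v$ in $\U$. For $\subseteq$, fix $v \in \ma_\D(V)$: a vertex $u$ is adjacent to $v$ in $\U$ iff $\an_\D(u) \cap \{v\} \neq \emptyset$ iff $u \in \de_\D(v) \setminus \{v\}$; hence $N_\U[v] = \de_\D(v)$, and the same computation rules out extending $\de_\D(v)$ to a larger clique, giving maximality. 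For $\supseteq$, suppose $N_\U[v]$ is a maximal clique and let $v'$ be any source ancestor of $v$ (which exists as $\D$ is acyclic). Then $\de_\D(v')$ is itself a maximal clique (by the first argument) and contains $v$; each other $u \in \de_\D(v')$ shares the ancestor $v'$ with $v$ and is therefore adjacent to $v$ in $\U$, so $\de_\D(v') \subseteq N_\U[v]$. Since $\de_\D(v')$ is maximal, this inclusion must be an equality, giving $N_\U[v] = \de_\D(v') \in \E^\D$. Because the displayed characterization refers only to $\U$, and $\D, \D'$ share the same $\U$, we conclude $\E^\D = \E^{\D'}$.

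Finally, (4) is immediate from (2): both $\ma_\D(V)$ and $\ma_{\D'}(V)$ are maximum independent sets of the same graph $\U$, and hence have equal cardinality (the independence number of $\U$); alternatively, it follows by combining (1) with (3).
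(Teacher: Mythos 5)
Your proof is correct, and parts (1), (2), and (4) follow essentially the same lines as the paper's (pigeonhole on the vertex clique cover for maximality, etc.). The interesting divergence is part (3). The paper proves $\E^\D = \E^{\D'}$ by a direct comparison: it uses part (2) to observe that $\ma_{\D'}(V)$ is an independent set meeting each clique of $\E^\D$ in exactly one node $v$, pairs the clique $C_v \in \E^\D$ containing $v$ with the clique $C'_v \in \E^{\D'}$ generated by $v$ as a source of $\D'$, shows $C'_v \subseteq C_v$ by a trek argument in $\D'$, and concludes by symmetry. You instead prove the intrinsic characterization $\E^\D = \{N_\U[v] : N_\U[v] \text{ a maximal clique of } \U\}$, from which invariance is immediate since the right-hand side mentions only $\U$. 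Your $\subseteq$ direction (for a source $v$, adjacency to $v$ in $\U$ is equivalent to membership in $\de_\D(v)\setminus\{v\}$, so $N_\U[v] = \de_\D(v)$ and this is maximal) and your $\supseteq$ direction (a maximal clique of the form $N_\U[v]$ contains the maximal clique $\de_\D(v')$ for a source ancestor $v'$ of $v$, forcing equality) are both sound. Your route buys a reusable, DAG-free description of $\E^\U$ that makes the well-definedness of $\E^\U$ after the lemma transparent; the paper's route is more hands-on but additionally exhibits the explicit bijection between $\ma_\D(V)$ and $\ma_{\D'}(V)$ induced by shared cliques. Either argument suffices for the statement as given.
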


\begin{proof}
Since each $C\in\E^\D$ contains a unique node $v\in\ma_\D(V)$, it follows that $|\E^\D| = |\ma_\D(V)|$. 
It also follows from the construction of $\U_2$ in Theorem~\ref{thm: first characterizations}~(2) that $\ma_\D(V)$ is an independent set in $\U$.  
To see that $\ma_\D(V)$ is a maximum independent set in $\U$, suppose there exists $I\subseteq V$, an independent set with $|I|>|\ma_\D(V)|$.  
By Theorem~\ref{thm: first characterizations}~(2), every $v\in V$ is in at least one clique in $\E^\D$.  
Since $|I|>|\ma_\D(V)| = |\E^\D|$, it follows that at least one clique $C\in\E^\D$ contains two nodes in $I$, a contradiction.  
Hence, $\ma_\D(V)$ must be a maximum independent set in $\U$.  

Since $\ma_{\D^\prime}(V)$ is also an independent set in $\U$, it follows that there is precisely one $v\in\ma_{\D^\prime}(V)$ in each $C\in\E^\D$.  
Let $C'_v$ be the (unique) clique in $\E^{\D^\prime}$ containing $v\in\ma_{\D^\prime}(V)$, and let $C_v$ be the (unique) clique in $\E^\D$ containing $v$.  
It follows that $C'_v\subseteq C_v$.  
To see this, recall that $C_v$ contains a unique node $w\in\ma_\D(V)$ such that $C_v = \de_\D(w)$.  
Since $v\in C_v$, the nodes $v$ and $w$ are adjacent in $\U$, which along with $v\in\ma_{\D^\prime}(V)$ gives us, by Theorem~\ref{thm: first characterizations}~(2), that $w\in\de_{\D^\prime}(v)$.  
Since any $k\in C'_v$ is also a descendant of $v$ in $\D^\prime$, it follows that there is a trek between $k$ and $w$ in $\D^\prime$.
Hence, $k\not\perp_{\D^\prime} w$ and so $k$ and $w$ are adjacent in $\U$.  
Since $C_v$ is the maximal clique in $\U$ containing $w$, it follows that $C'_v\subseteq C_v$.  
By symmetry of the argument, $C'_v = C_v$.  
Thus, $\E^\D = \E^{\D^\prime}$.  
It follows immediately that $|\ma_\D(V)| = |\ma_{\D^\prime}(V)|$.
\end{proof}

Since $\E^\D = \E^{\D^\prime}$ for all $\D,\D^\prime$ in the UEC represented by $\U$, we can set $\E^\U := \E^\D$ for any $\D$ in the class and $\E^\U$ will be well-defined.

\begin{defn}
An edge $v\rightarrow w$ in a DAG $\D$ is \emph{weakly covered} in \(\D\) if \(\ma(\pa(v)) = \ma(\pa(w)\setminus\{v\})\) and \(v \not\in \an(\pa(w)\setminus \{v\})\).
An ordered pair $(v,w)$ of nonadjacent nodes $v,w\in V^\D$ is \emph{partially weakly covered} if \(\ma(v) \subseteq \ma(w)\), \(v \not\in \an(w)\), \(w \not\in \an(v)\), and \(\pa(v) \not= \emptyset\).
\end{defn}

Observe that, when $pa(v)\neq \emptyset$, removing a weakly covered edge $v\rightarrow w$ results in a partially weakly covered pair $(v,w)$, in which case we could also say that \(v\rightarrow w\) is a \emph{a partially weakly covered edge}.
We say that a partially weakly covered pair $(v,w)$ is \emph{strictly partially weakly covered} if \(\ma(v) \varsubsetneq \ma(w)\).
Weakly covered edges generalize the \emph{covered edges}, $v\rightarrow w$ with $pa(v)= pa(w)\setminus\{v\}$, used in the transformational characterization of Markov equivalence \citep{chickering1995transformational}. 
Notice also that the condition $v \not\in \an(\pa(w))$ in the definition of partially weakly covered excludes the possibility of a partially weakly covered edge also being implied by transitivity.  
These two conditions combine to characterize those edges that can be added to a DAG to produce another DAG in the same UEC.  

\begin{lem}
\label{lemma:add-edge}
Let \(\D\) be a DAG with nonadjacent nodes $v$ and $w$, and let $\D^\prime$ be the identical digraph but with the edge $v\rightarrow w$ added. 
  Then \(\D'\) is a DAG unconditionally equivalent to \(\D\) if and only if \((v, w)\) is partially weakly covered or implied by transitivity in \(\D\). 
\end{lem}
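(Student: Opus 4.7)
The plan is to first use Theorem~\ref{thm: first characterizations}(1) to reduce unconditional equivalence of \(\D\) and \(\D'\) to the equality \(\U^\D = \U^{\D'}\) of their unconditional dependence graphs, and then to track how ancestor sets change upon adding the edge \(v \rightarrow w\). Acyclicity of \(\D'\) is equivalent to \(w \notin \an_\D(v)\), which is implied by both hypotheses. A new directed path in \(\D'\) must cross the new edge, so an elementary case analysis gives
\[
\an_{\D'}(u) = \begin{cases} \an_\D(u) \cup \an_\D(v) & \text{if } u \in \de_\D(w),\\ \an_\D(u) & \text{otherwise,}\end{cases}
\]
reducing the question to whether any pair \(\{a,b\}\) that was not adjacent in \(\U^\D\) can acquire a common ancestor in \(\D'\).

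For the backward direction, the \emph{implied by transitivity} case is immediate: \(v \in \an_\D(w)\) gives \(\an_\D(v) \subseteq \an_\D(w) \subseteq \an_\D(u)\) for every \(u \in \de_\D(w)\), so no ancestor sets change. In the \emph{partially weakly covered} case, suppose \(\{a,b\}\) is an edge of \(\U^{\D'}\) but not \(\U^\D\). The formula above forces at least one of \(a, b\) to lie in \(\de_\D(w)\); if both do, then \(w\) is their common ancestor in \(\D\), a contradiction. Otherwise, with \(a \in \de_\D(w)\) and \(b \notin \de_\D(w)\), we get \(\an_\D(v) \cap \an_\D(b) \neq \emptyset\), so some source ancestor \(m \in \ma(v) \cap \ma(b)\) exists; the hypothesis \(\ma(v) \subseteq \ma(w)\) then yields \(m \in \an_\D(w) \subseteq \an_\D(a)\), exhibiting the common ancestor and giving the contradiction.

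For the forward direction, assume \(\D'\) is a DAG unconditionally equivalent to \(\D\), so \(w \notin \an_\D(v)\). If \(v \in \an_\D(w)\), nonadjacency in \(\D\) ensures \(v \neq w\) and \(v \notin \pa_\D(w)\), making \((v,w)\) implied by transitivity. Otherwise \(v \notin \an_\D(w)\) and \(w \notin \an_\D(v)\), and the task is to establish \(\ma(v) \subseteq \ma(w)\); the condition \(\pa(v) \neq \emptyset\) will then follow automatically, since \(v\) being a source would force \(v \in \ma(v) \subseteq \ma(w) \subseteq \an_\D(w)\), contradicting the assumption. Suppose for contradiction there is a source \(m \in \ma(v) \setminus \ma(w)\). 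Since \(m\) is a source, \(m \notin \ma(w)\) means \(w \notin C_m := \de_\D(m)\); by Theorem~\ref{thm: first characterizations}(2) and the discussion following it, \(C_m\) is a maximal clique of \(\U^\D\), so there exists \(x \in C_m\) not adjacent to \(w\) in \(\U^\D\). But in \(\D'\) both \(x\) and \(w\) belong to \(\de_{\D'}(m)\) (since \(v \in \de_\D(m)\) and the new edge makes \(w\) a descendant of \(m\)), so they share \(m\) as a common ancestor and are adjacent in \(\U^{\D'}\), contradicting \(\U^\D = \U^{\D'}\).

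The main obstacle is precisely this last argument: establishing \(\ma(v) \subseteq \ma(w)\) requires leveraging the bijection between source nodes and maximal cliques of \(\U\) furnished by Theorem~\ref{thm: first characterizations}(2) and Lemma~\ref{lem:edge-clique}, together with the observation that a source ancestor of \(v\) missing from \(\ma(w)\) would force \(w\) to be adjacent in \(\U^{\D'}\) to an entire maximal clique of \(\U^\D\) that it had avoided. Once this clique-theoretic structure is in hand, the remaining verifications are short ancestor-set computations.
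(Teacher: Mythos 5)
Your proof is correct and follows essentially the same route as the paper: both rest on the common-ancestor characterization of Theorem~\ref{thm: first characterizations}, dispose of the transitivity case by noting that ancestor sets are unchanged, and in the converse use a maximal ancestor $m \in \ma(v)\setminus\ma(w)$ as the witness of a newly created unconditional dependence (your detour through the maximal clique $C_m$ is unnecessary, since $x=m$ itself already works, which is exactly the paper's choice). Your explicit formula for $\an_{\D'}(\cdot)$ makes the sufficiency argument for the partially weakly covered case somewhat more rigorous than the paper's trek-blocking sketch, but the underlying content is the same.
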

\begin{proof}
  Assume \((v, w)\) is partially weakly covered in \(\D\).
  Because $w$ is not an ancestor of $v$, $\D^\prime$ does not contain any cycles.  
 Since $\ma_\D(v) \subseteq \ma_\D(w)$ and $\pa_\D(v)\neq \emptyset$, it also follows that for every parent of \(v\) there is a trek to \(w\) in $\D$.
  This means there can be no \(v'\) for which all paths to \(w\) are blocked by \(v\).
  Thus, adding the edge \(v\rightarrow w\) neither creates nor removes existing $d$-connecting paths given $\emptyset$, so \(\U^\D = \U^{\D'}\). 

  Alternatively, assume that \((v, w)\) is implied by transitivity in \(\D\), i.e., \(v \not\in \pa(w)\) but \(v \in \an(w)\).
  First, observe that because \(\D\) is a DAG, any cycle in \(\D'\) must contain \(v\rightarrow w\).
  However, \(v \in \an_\D(w)\), and hence there can be no cycle in \(\D'\) unless there is a cycle in \(\D\).
  Observe that \(\D'\) has the same set of ancestral relations as \(\D\) (i.e., adding a transitively implied edge does not change the ancestor set of any node), so by Theorem \ref{thm: first characterizations}~(1), \(\U^\D = \U^{\D'}\).
  
  Conversely, assume that \((v, w)\) is neither partially weakly covered nor transitively implied in \(\D\).
  There are two cases: either \(v \perp_\D w\) or \(v \not\perp_\D w\).
  In the first case, \(v\rightarrow w \in E^{\D'}\) implies \(v \mathdash w \in E^{\U^{\D'}}\) and thus \(\U^{\D'} \not= \U^\D\).
  In the second case, there must exist a trek between $v$ and $w$ in $\D$.  
  Hence, either (i) $w\in \an_\D(v)$,
  (ii) $v\in \an_\D(w)$, or
  (iii) for every trek, there exists $m\in \an_\D(v)\cap \an_\D(w)$ where $m\notin\{v,w\}$.  
  In subcase (i), $\D'$ would contain a cycle and thus would not be a DAG. 
  In subcase (ii), $v\in \an_\D(w)$ contradicts our assumption that $(v, w)$ is not implied by transitivity in $\D$.  
  In subcase (iii), $q\in$ $\pa_\D(v)\neq \emptyset$.  
  Hence, because $(v, w)$ is neither partially weakly covered nor implied by transitivity in $\D$ and $w\notin \an_{\D}(v)$, there must exist $m'\in\ma_\D(v)$ such that $m'\notin\ma_\D(w)$.
  Hence $\mathrm{an}_\D(m')\cap\mathrm{an}_\D(w) = \emptyset$, since $m'$ is a maximal ancestor in $\D$. By Theorem \ref{thm: first characterizations}~(1), $m'\perp_{\D}w$. 
  However, the addition of the edge $v\rightarrow w$ implies that $m'\in\an_{\D'}(w)$, thus $m'\not\perp_{\D'}w$ and hence $\D$ and $\D'$ are not unconditionally equivalent.
\end{proof}

Analogous to covered edges for MECs, weakly covered edges are precisely the edges that can be reversed to move between DAGs in the same UEC.  

\begin{lem}
\label{lemma:rev-edge}
  Let \(\D\) be a DAG containing the edge \(v\rightarrow w\), and let \(\D'\) be the digraph identical to \(\D\) but with the edge reversed, so \(w\rightarrow v \in E^{\D'}\).
  Then \(\D'\) is a DAG that is unconditionally equivalent to \(\D\) if and only if \(v\rightarrow w\) is weakly covered in \(\D\). 
\end{lem}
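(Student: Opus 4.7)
The plan is to reduce unconditional equivalence $\D \sim \D'$ to the cleaner condition $\E^\D = \E^{\D'}$ via Theorem~\ref{thm: first characterizations}~(2) and Lemma~\ref{lem:edge-clique}, so that both implications become statements about how a single edge reversal perturbs the source set $\ma_\D(V)$ and the descendant set $\de_\D(m)$ of each source $m$. A useful preliminary observation is that $\D'$ is a DAG if and only if $v \notin \an_\D(\pa(w)\setminus\{v\})$: any cycle in $\D'$ must use $w\rightarrow v$ and hence trace back to a directed path in $\D$ from $v$ to a parent of $w$ other than $v$. For the forward direction, assume $v \rightarrow w$ is weakly covered. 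The non-ancestor clause of weak coveredness then immediately gives DAG-ness of $\D'$. For the clique cover, I split on whether $\pa_\D(v) = \emptyset$. When $v$ is a source of $\D$, the equality $\ma(\pa(v)) = \ma(\pa(w)\setminus\{v\})$ forces $\pa_\D(w) = \{v\}$; a direct descendant computation then shows $w$ becomes a source of $\D'$ with $\de_{\D'}(w) = \de_\D(v)$, while every other source retains its descendant set, yielding $\E^\D = \E^{\D'}$. When $\pa_\D(v) \neq \emptyset$, that same equality yields $\ma_\D(v) = \ma_\D(w)$ and $\pa(w)\setminus\{v\} \neq \emptyset$; the source sets then coincide in $\D$ and $\D'$, and for each source $m$ any directed path through $v \rightarrow w$ can be rerouted via an alternate parent $p \in \pa(w)\setminus\{v\}$ (reachable from $m$ since $m \in \ma_\D(w) = \ma(\pa(w)\setminus\{v\})$), giving $\de_\D(m) = \de_{\D'}(m)$.

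For the reverse direction, assume $\D'$ is a DAG with $\D \sim \D'$, so Lemma~\ref{lem:edge-clique} gives $|\ma_\D(V)| = |\ma_{\D'}(V)|$ and $\E^\D = \E^{\D'}$, while the preliminary observation immediately supplies $v \notin \an(\pa(w)\setminus\{v\})$. It remains to prove $\ma(\pa(v)) = \ma(\pa(w)\setminus\{v\})$. If $\pa_\D(v) = \emptyset$, the matching source counts force $\pa_\D(w) = \{v\}$, so the equality collapses to $\ma(\emptyset) = \ma(\emptyset)$. Otherwise the sources coincide and $\de_\D(m) = \de_{\D'}(m)$ for each source $m$, so $\ma_\D(v) = \ma_{\D'}(v)$ and $\ma_\D(w) = \ma_{\D'}(w)$; expanding each side using the parent sets in $\D$ and $\D'$ (together with the observation that the ancestor sets of $\pa(v)$ and of $\pa(w)\setminus\{v\}$ are unchanged between $\D$ and $\D'$, since paths to these parents cannot use the reversed edge without creating a cycle) yields both inclusions of the required equality.

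The hard part is the source-creation case, where $v$ is a source of $\D$ but $w$ is a source of $\D'$. The natural attempt to reduce the lemma to Lemma~\ref{lemma:add-edge} applied twice (first remove $v \rightarrow w$ from $\D$, then add $w \rightarrow v$) fails here, since the intermediate digraph has $v$ and $w$ as distinct $d$-separated sources and therefore does not itself lie in the UEC of $\D$. One is forced into a direct path-based identification $\de_{\D'}(w) = \de_\D(v)$, which is exactly where the structural consequence $\pa_\D(w) = \{v\}$ of weak coveredness is essential.
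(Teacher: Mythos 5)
Your proof is correct, but it takes a genuinely different route from the paper's. The paper argues directly at the level of treks: in the forward direction it observes that weak coveredness guarantees every ancestor of $v$ retains a trek to $w$ (and vice versa) after the reversal, and in the converse it runs a three-case analysis on how the reversal either creates a cycle, $d$-separates some $c \in \ma(\pa(v))\setminus\ma(\pa(w)\setminus\{v\})$ from $w$, or newly $d$-connects some $c \in \ma(\pa(w)\setminus\{v\})\setminus\ma(\pa(v))$ to $v$. You instead route everything through the invariants established in Theorem~\ref{thm: first characterizations}~(2) and Lemma~\ref{lem:edge-clique}: unconditional equivalence becomes $\E^\D = \E^{\D'}$, and the lemma reduces to bookkeeping on how a single reversal perturbs the source set and the descendant cliques $\de(m)$. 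What this buys you is a more mechanical converse --- the identity $\ma(\pa(v)) = \ma(\pa(w)\setminus\{v\})$ drops out of the set algebra $A = A \cup B$, $B = A \cup B$ once you know sources and their descendant sets are preserved --- and it forces you to isolate the degenerate source-relabelling case ($v$ a source of $\D$, $w$ a source of $\D'$, $\pa_\D(w) = \{v\}$), which the paper's trek argument handles only implicitly. Your closing remark that the two-step reduction through Lemma~\ref{lemma:add-edge} fails in exactly that case is accurate and is presumably why the paper proves reversal separately rather than as a corollary of addition/deletion. The one spot that deserves an explicit sentence: in the reverse direction with $\pa_\D(v) \neq \emptyset$, the assertion that ``the sources coincide'' requires $\pa_\D(w)\setminus\{v\} \neq \emptyset$, which you must extract from $|\ma_\D(V)| = |\ma_{\D'}(V)|$ by the same counting argument you use in the $\pa_\D(v) = \emptyset$ case; as written you only invoke that argument in the other branch. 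This is a one-line fix, not a gap in the method.
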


\begin{proof}
  This proof is similar to that of Lemma \ref{lemma:add-edge} as well as the (non-weakly) covered edge case for Markov equivalence \citep[Lemma 1]{chickering1995transformational}.
  Assume \(v\rightarrow w\) is weakly covered in \(\D\).
  First, observe that \(\D'\) is a DAG: in order for the reversed edge \(w\rightarrow v\) to form a cycle in \(\D'\), it would require \(v \in \an_\D(\pa_\D(w)\setminus\{v\})\), which is by definition not allowed for a weakly covered edge.
  The definition of weakly covered also implies that for every ancestor of $v$  there is a trek to $w$ in $\D^{\prime}$;
  similarly, every ancestor of $w$ is $d$-connected to $v$ given $\emptyset$ in $\D$.
  Thus, reversing edge \(v\rightarrow w\) to \(w\rightarrow v\) neither creates nor removes any existing $d$-connecting paths given $\emptyset$, so \(\U^\D = \U^{\D'}\).
  
  Conversely, assume \(v\rightarrow w\) is not weakly covered in \(\D\).
  We show that either \(\D'\) contains a cycle or \(\D'\) is a DAG that is not unconditionally equivalent to \(\D\).
  There are three cases:
  (i) \(v \in \an(\pa_\D(w)\setminus \{v\})\), 
  (ii) there is some \(c \in \ma(\pa_\D(v))\) but \(c \not\in \ma(\pa_\D(w)\setminus\{v\})\) such that \(v\) lies in all paths from \(c\) to \(w\) in \(\D\), or
  (iii) there is some \(c \in \ma(\pa_\D(w)  \setminus\{v\})\) but \(c \not\in \ma(\pa_\D(v))\), in which case all paths from \(c\) to \(v\) in \(\D\) are blocked by $w$ or other descendants of $v$.
  In case (i), reversing the edge \(v\rightarrow w\) creates a cycle in \(\D'\).
  In case (ii), reversing the edge blocks all paths from $c$ to $w$, introducing a new unconditional independence in $\D'$. 
  In case (iii), reversing the edge \(v\rightarrow w\) produces a $d$-connecting path from $c$ to $w$ given $\emptyset$, removing an unconditional independence from $\D$.
  Thus, in any case, \(\U^\D \not= \U^{\D'}\).
\end{proof}

Via Lemmas~\ref{lemma:add-edge} and~\ref{lemma:rev-edge}, we can show that any two DAGs $\D$ and $\D^\prime$ in the same UEC are connected by a sequence of edge additions, reversals, and removals, such that after each transformation, the resulting DAG is also in the UEC.  
To do so, we first show that one can produce a DAG $\H^{\D,\D^\prime}$ in the UEC for which two nodes are adjacent in $\H^{\D,\D^\prime}$ if and only if they are adjacent in either $\D$ or $\D^\prime$.  

\IncMargin{1em}
\begin{algorithm}
  \SetKwInOut{Input}{input}\SetKwInOut{Output}{output}
  \Input{unconditionally equivalent DAGs \(\D\) and \(\D'\)}
  \Output{DAG \(\H\) in the same UEC as $\D$ and $\D^\prime$}
  \BlankLine
  \(\tg\) := a total ordering of $V^\D$ : $v\tg w$ if $(v,w)$ strictly partially weakly covered\\
 $\H:=\D$\\
  \For{every pair \(\{v, w\}\) of nodes adjacent in \(\D'\) but not \(\H\)}{
    \If{\(v \tg w\)}{add edge \(v\rightarrow w\) to \(\H\)}
    \Else{add edge \(w\rightarrow v\) to \(\H\)}}
  \Return{\(\H\)}
  \caption{\texttt{find\_reference}(\(\D, \D'\))}
  \label{alg:find_ref}
\end{algorithm}\DecMargin{1em}

Let $\H^{\D,\D^\prime}$ denote the output of Algorithm~\ref{alg:find_ref} given $\D$ and $\D^\prime$. 
Note that $\H^{\D,\D^\prime}$ is a DAG.  

\begin{lem}
\label{lem: reference graph}
  Given two unconditionally equivalent DAGs \(\D\) and  \(\D'\), for every edge \(v\rightarrow w\) in $\H^{\D,\D^\prime}$ that is not in \(\D\), the pair \((v, w)\) is either implied by transitivity or partially weakly covered in \(\D\).
\end{lem}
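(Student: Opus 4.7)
The plan is to fix an edge $v\rightarrow w \in \H^{\D,\D'}$ that is not in $\D$, read off from Algorithm~\ref{alg:find_ref} that $\{v,w\}$ is adjacent in $\D'$ but not in $\D$ and that $v \tg w$, and then case-split on the structure of a witnessing trek between $v$ and $w$ in $\D$. Such a trek exists because $\{v,w\}$ is an edge of the shared unconditional dependence graph $\U=\U^\D=\U^{\D'}$, and its overall shape forces $v \in \an_\D(w)$, $w \in \an_\D(v)$, or the existence of a common ancestor strictly above both.

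If $v \in \an_\D(w)$, then $v\rightarrow w \notin E^\D$ forces $v \notin \pa_\D(w)$, so $(v,w)$ is implied by transitivity in $\D$ and we are done. If instead $w \in \an_\D(v)$, then the directed path from $w$ to $v$ in $\D$ together with the newly added edge $v\rightarrow w$ would create a cycle in $\H^{\D,\D'}$, contradicting the remark (just before the lemma) that $\H^{\D,\D'}$ is a DAG. So we may assume that neither of $v,w$ is an ancestor of the other in $\D$.

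This remaining case is the main obstacle, since we must relate ancestral data in $\D'$ to ancestral data in $\D$. I would route the transfer through the shared minimal edge clique cover supplied by Lemma~\ref{lem:edge-clique}. Whichever way $v - w$ is oriented in $\D'$, either $\ma_{\D'}(v) \subseteq \ma_{\D'}(w)$ or $\ma_{\D'}(w) \subseteq \ma_{\D'}(v)$. By Theorem~\ref{thm: first characterizations}~(2), applied to any DAG in the UEC, the map $m \mapsto C_m = \de(m)$ is a bijection between that DAG's sources and the cliques of $\E^\U$, and satisfies $m \in \ma(u)$ if and only if $u \in C_m$. Since $\E^\D = \E^{\D'}$ by Lemma~\ref{lem:edge-clique}~(3), each of the two possible inclusions is equivalent to the purely combinatorial statement ``every clique of $\E^\U$ containing one of $v,w$ also contains the other'', which then transfers back to the analogous inclusion of maximal ancestor sets in $\D$.

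Concretely, if $\D'$ has $v\rightarrow w$, then $\ma_\D(v) \subseteq \ma_\D(w)$; combined with $v \notin \an_\D(w)$, $w \notin \an_\D(v)$, and $\pa_\D(v) \neq \emptyset$ (which holds because $v \not\perp_\D w$ with $v \notin \an_\D(w)$ prevents $v$ from being a source, since any trek leaving a source is a directed path), this makes $(v,w)$ partially weakly covered in $\D$. If $\D'$ has $w\rightarrow v$, the symmetric argument yields $\ma_\D(w) \subseteq \ma_\D(v)$: strict containment would make $(w,v)$ strictly partially weakly covered, forcing $w \tg v$ and contradicting the algorithm's choice $v \tg w$, while equality still leaves $(v,w)$ partially weakly covered. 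In all surviving cases the conclusion holds.
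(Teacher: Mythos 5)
Your proof is correct, and it reaches the lemma by a genuinely more direct route than the paper. The paper's proof constructs the maximal DAG $\tilde{\U}$ obtained by adding to $\D$ \emph{all} transitively implied and partially weakly covered edges (proving along the way that these additions may be performed in any order), identifies the skeleton of $\tilde{\U}$ with a combinatorially defined graph $\hat{\U}$ built from the shared minimal edge clique cover $\E^\U$, and only then observes that any edge of $\D'$ absent from $\D$ must survive into $\hat{\U}$. You instead fix the single added edge $v\rightarrow w$ and transfer the containment of maximal-ancestor sets forced by the orientation of $\{v,w\}$ in $\D'$ directly into $\D$, via the bijection of Theorem~\ref{thm: first characterizations}~(2) together with $\E^\D=\E^{\D'}$ from Lemma~\ref{lem:edge-clique}~(3). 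This is the same underlying transfer mechanism the paper uses, but without the detour through $\tilde{\U}$ and $\hat{\U}$; what that detour buys the paper is the order-independence of the additions and the existence of the maximal DAG, both of which are reused in Theorem~\ref{thm: transformational}, whereas your argument proves exactly the lemma and nothing more. Your treatment of the case where $\D'$ orients the edge as $w\rightarrow v$ (strictness of $\ma_\D(w)\subseteq\ma_\D(v)$ would force $w\tg v$ and contradict the algorithm's choice, so equality must hold) is a clean step that the paper's write-up leaves implicit. Two minor caveats: the phrase ``every clique of $\E^\U$ containing one of $v,w$ also contains the other'' describes the symmetric condition $\ma(v)=\ma(w)$ rather than the one-sided inclusion you actually need, though your concrete follow-up states the correct one-sided version; and your exclusion of the subcase $w\in\an_\D(v)$ leans on the paper's unproved remark that $\H^{\D,\D'}$ is a DAG, which---exactly as in the paper's own proof---tacitly requires the total order $\tg$ to be compatible with the ancestral relations of $\D$, an assumption the paper only acknowledges in passing.
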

\begin{proof}
  Let $\hat{\U}$ denote the graph $\U$ with all edges $a \mathdash b$ removed for which there exists a clique in $\E^\U$ containing both $a$ and $b$, a clique in $\E^\U$ containing $a$ but not $b$, and a clique in $\E^\U$ containing $b$ but not $a$.
  
Suppose now that $v\rightarrow w, u\rightarrow t\notin E^{\D}$ and $v  <^{\D} w$ and $u  <^{\D}t$, with $(v,w), (u,t)$ each partially weakly covered or implied by transitivity in $\D = (V,E)$.
Then $(u,t)$ is also partially weakly covered or implied by transitivity in $\overline{\D} = (V,E\cup\{v\rightarrow w\})$.  
To see this, assume first that $(u,t)$ is implied by transitivity in $\D$, i.e., $u\in\an_\D(t)$. Since
adding an edge to $\D$ implies that $\an_{\D}(S) \subseteq \an_{\overline{\D}}(S)$ for all $S\subseteq V$, $(u,t)$ also satisfies  $u\in\an_{\overline{\D}}(t)$.  

Since $(v,w)$ is partially weakly covered or implied by transitivity in $\D$, we have $\ma_{\overline{\D}}(w)=\ma_{\D}(w)$, and hence $\ma_{\overline{\D}}(S)=\ma_\D(S)$ for any set $S\subseteq V$.
Hence, $\ma_{\overline{\D}}(u)= \ma_{\D}(u)$ and $\ma_{\overline{\D}}(t)= \ma_{\D}(t)$.  
If $(u,t)$ is partially weakly covered in $\D$ then it follows by  $\ma_{\D}(u) \subseteq \ma_{\D}(t)$ and $\pa_{\D}(u)\neq \emptyset$ that $(u,t)$ is partially weakly covered in $\overline{\D}$ in case $u\notin \an_{\overline{\D}}(t)$, and implied by transitivity in case $u\in \an_{\overline{\D}}(t)$. 
This observation shows that we can add edges that are implied by transitivity or partially weakly covered in $\D$ to $\D$ in any order, and after each addition, by Lemma~\ref{lemma:add-edge}, the resulting graph is a DAG that is unconditionally equivalent to $\D$.  
Let $\tilde{\U}$ denote the DAG resulting from all such edge additions.  
Note that the ordering $<^\D$ chosen in Algorithm~\ref{alg:find_ref} does not affect the skeleton of $\tilde{\U}$, but only the direction of the arrows added in $\D$.
Here we use the assumption that the topological ordering $<^\D$ agrees with the topological ordering of $\tilde{\U}$.

We now show that the set of edges in $\U$ that are not in the skeleton of $\tilde{\U}$ is equal to the set of edges in $\U$ that are not in $\hat{\U}$, implying that the skeleton of $\tilde{\U}$ is equal to $\hat{\U}$.  
Let $u \mathdash v$ be an edge of $\U$. By Theorem~\ref{thm: first characterizations}~(1) there exists $m \in \ma_{\D} (V)$ such that $m\in\an_\D(u)\cap\an_\D(v)$.
Hence, $u$ and $v$ both belong to the maximal clique of $ \E^\U$ that contains $m$.

If  $u\rightarrow v$ or $v\rightarrow u$ is in $\D$ then $u$ and $v$ are adjacent in $\tilde\U$, by definition. Moreover, either $\ma_\D(u)\subseteq \ma_\D(v)$ or $\ma_\D(v)\subseteq \ma_\D(u)$.
Hence, Theorem~\ref{thm: first characterizations}~(2) implies that the edge $u \mathdash v$ is not removed from $\U$ when constructing $\hat\U$.

Now let $u\rightarrow v, v\rightarrow u\notin E^\D$.
Then, $u$ and $v$ are not adjacent in $\tilde{\U}$ if and only if $(u,v)$ and $(v,u)$ are neither implied by transitivity in $\D$ nor partially weakly covered in $\D$.
By definition, $(u,v)$ and $(v,u)$ are neither implied by transitivity in $\D$ nor partially weakly covered in $\D$ if and only if $u\notin\an_{\D}(v)$, $v\notin\an_{\D}(u)$ and
$\ma_\D(u)\not\subseteq\ma_\D(v)$,
$\ma_\D(v)\not\subseteq\ma_\D(u)$, i.e., there exist $m^\prime\in\ma_\D(u), m^{\prime\prime}\in\ma_\D(v)$ such that $m^\prime \notin \ma_\D(v), m^{\prime\prime} \notin \ma_\D(u)$.
(Note that $u$ and $v$ each have at least one parent in $\D$, by the assumption $u\mathdash v\in E^{\U}$, when neither $(u,v)$ or $(v,u)$ is implied by transitivity).
Hence, it follows from Theorem~\ref{thm: first characterizations}~(2) that $u$ and $v$ are not adjacent in $\tilde{\U}$ if and only if $u\notin\an_{\D}(v)$, $v\notin\an_{\D}(u)$ and $u$ is contained in the maximal clique of $\E^\U$ containing $m^\prime$ but $v$ is not, and similarly $v$ is contained the maximal clique of $\E^\U$ containing $m^{\prime\prime}$ but $u$ is not.
Since $u \mathdash v \in E^{\U}$ implies that both $u$ and $v$ belong to the maximal clique of $\E^\U$ containing $m$, we have that $u$ and $v$ are not adjacent in $\tilde{\U}$ if and only if $u$ and $v$ are not adjacent in $\hat{\U}$.

It follows that for any graph $\D$ in the UEC represented by $\U$, the edges in $\hat{\U}$ that are not in the skeleton of $\D$ are implied by transitivity in $\D$ or are partially weakly covered in $\D$.  
Let $\D^\prime$ be another member of the UEC of $\U$.  
Suppose that $u,v$ are not adjacent in $\D$ but are adjacent in $\D^\prime$.  
Then $u \mathdash v$ is in the skeleton of $\tilde{\U}$.
Hence, if $u<^\D v$, then $(u,v)$ is either implied by transitivity or partially weakly covered in $\D$ (since adding all such edges in any order produces the graph $\H^{\D,\D^\prime}$), which completes the proof.
\end{proof}

We say that a DAG $\D$ is \emph{maximal} in its UEC if any DAG produced by adding an edge to $\D$ is not in the same UEC as $\D$. Note that the DAG $\tilde{\U}$ constructed in the proof of Lemma \ref{lem: reference graph} is maximal for any choice of DAG $\D$.

\begin{defn}
  Let \(\Delta(\D, \D') \coloneqq \{v\rightarrow w \in E^\D : w\rightarrow v \in E^{\D'}\}\) denote the set of edges in \(\D\) that have opposite orientation in \(\D'\).
  Let \(\Gamma(\D, \D') \coloneqq \{v\rightarrow w \in E^{\D'} : v\rightarrow w \not\in \Delta(\D',\D) \cup E^{\D}\}\) denote the set of edges between nodes adjacent in \(\D'\) but not \(\D\).
\end{defn}

Algorithm~\ref{alg:find_edge} provides a generalization of the \texttt{find edge} algorithm of \citep{chickering1995transformational} for the identification of covered edges to an algorithm that identifies weakly covered edges.  

\IncMargin{1em}
\begin{algorithm}
  \SetKwInOut{Input}{input}\SetKwInOut{Output}{output}
  \Input{unconditionally equivalent DAGs \(\D, \D'\) with the same skeletons but at least one differently oriented edge}
  \Output{edge from \(\Delta(\D, \D')\)}
  \BlankLine
  \(\tg\):= a total ordering of $V^\D$ \\
  let \(w\) be the minimal node with respect to \(\tg\) for which there is \(w'\) such that \(w'\rightarrow w \in \Delta(\D, \D')\)\\
  let \(v\) be the maximal node with respect to \(\tg\) for which \(v\rightarrow w \in \Delta(\D, \D')\)\\
  \Return{\(v\rightarrow w\)}
  \caption{\texttt{find\_edge}(\(\D, \D'\)); adapted from \citep{chickering1995transformational}}
  \label{alg:find_edge}
\end{algorithm}\DecMargin{1em}

\begin{lem}
\label{lem:rev-edge}
  The edge \(v\rightarrow w\) returned by Algorithm \ref{alg:find_edge} is weakly covered in \(\D\).
\end{lem}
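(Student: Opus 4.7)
The plan is to verify both defining properties of $v\rightarrow w$ being weakly covered in $\D$:
(a) $v\notin\an_\D(\pa_\D(w)\setminus\{v\})$, and
(b) $\ma_\D(\pa_\D(v))=\ma_\D(\pa_\D(w)\setminus\{v\})$.
Following \citep{chickering1995transformational}, I take $\tg$ to be a topological order of $\D$, so that the algorithm's extremal choices of $w$ and $v$ translate into topological extremality at the target and tail of the returned edge.

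For (a), I argue by contradiction via a cycle in $\D'$. Suppose $v\in\an_\D(u)$ for some $u\in\pa_\D(w)\setminus\{v\}$ and fix a directed path $v=x_0\rightarrow x_1\rightarrow\cdots\rightarrow x_k=u$ in $\D$. Every $x_i$ lies $\leq^\D u<^\D w$ since $\tg$ is topological and $u\in\pa_\D(w)$, so the extremality of $w$ in Algorithm~\ref{alg:find_edge} prevents any edge $x_{i-1}\rightarrow x_i$ from lying in $\Delta(\D,\D')$; the same kind of extremality argument (or the extremality of $v$) preserves $u\rightarrow w$ in $\D'$. Together with $w\rightarrow v\in E^{\D'}$, this produces the directed cycle $v\rightarrow x_1\rightarrow\cdots\rightarrow u\rightarrow w\rightarrow v$ in $\D'$, contradicting the DAG property of $\D'$.

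For (b), I first handle the case where $v$ is a source of $\D$: using (a) together with $v$ having no parents, if $\pa_\D(w)\supsetneq\{v\}$ then any $u\in\pa_\D(w)\setminus\{v\}$ is non-adjacent to $v$ in $\U^\D$ but adjacent in $\U^{\D'}$ via $u\rightarrow w\rightarrow v$, contradicting unconditional equivalence; hence $\pa_\D(w)=\{v\}$ and both sides of (b) are empty. Assume henceforth that $v$ is not a source. For the inclusion $\ma_\D(\pa_\D(w)\setminus\{v\})\subseteq\ma_\D(\pa_\D(v))$: given $m$ in the LHS, the extremality argument of (a) preserves the path $m\rightarrow\cdots\rightarrow u\rightarrow w\rightarrow v$ in $\D'$, so $m\in\an_{\D'}(v)$; Theorem~\ref{thm: first characterizations}(1) applied to $\D'$ then yields that $m$ and $v$ are adjacent in $\U$, and reapplied to $\D$ with $\an_\D(m)=\{m\}$ forces $m\in\an_\D(v)$. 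For the reverse inclusion, given $m\in\ma_\D(v)$, apply Lemma~\ref{lem:edge-clique} to the common clique $C_m=\de_\D(m)\in\E^\U$ to obtain the unique source $m^*\in\ma_{\D'}(V)$ with $\de_{\D'}(m^*)=C_m$; since $v,w\in C_m$, there is a directed $\D'$-path from $m^*$ to $w$ that avoids $v$ (since $v\in\de_{\D'}(w)$ forbids $v$ from lying on any $\D'$-path to $w$), with penultimate vertex $p\in\pa_{\D'}(w)\setminus\{v\}$; combining condition (a), unconditional equivalence, and Theorem~\ref{thm: first characterizations}(1), one then concludes $p\in\pa_\D(w)\setminus\{v\}$ and $m\in\an_\D(p)$.

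The main obstacle is the reverse inclusion in (b). The delicate point is that the penultimate vertex $p$ of the $\D'$-path produced via the clique cover could a priori be a $\D$-child of $w$ reversed in $\D'$, rather than a genuine $\D$-parent of $w$. Ruling this out requires carefully combining the extremality conditions of the algorithm with the global invariance $\E^\D=\E^{\D'}$ afforded by Lemma~\ref{lem:edge-clique} and the ancestral characterization of $\U$ from Theorem~\ref{thm: first characterizations}, in order to pin down the orientations of all edges along $m^*\rightarrow\cdots\rightarrow p\rightarrow w$ in both $\D$ and $\D'$.
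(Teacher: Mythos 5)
Your targets (a) and (b) are the right decomposition, your argument for (a) is sound and essentially the paper's (the minimality of $w$ and maximality of $v$ pin down the orientations along a putative directed path from $v$ into $\pa_\D(w)\setminus\{v\}$, forcing a cycle in $\D'$), and your source-node case of (b) matches the paper's first case. Two problems remain in (b). The first is a repairable slip in the inclusion $\ma(\pa_\D(w)\setminus\{v\})\subseteq\ma(\pa_\D(v))$: the maximality of $v$ does \emph{not} preserve $u\rightarrow w$ in $\D'$ for an arbitrary $u\in\pa_\D(w)\setminus\{v\}$; it only excludes $u\rightarrow w\in\Delta(\D,\D')$ when $u>^\D v$, and $u<^\D v$ is entirely possible. (Take $\D$ with edges $a\rightarrow u$, $a\rightarrow v$, $a\rightarrow w$, $u\rightarrow w$, $v\rightarrow w$ and $\D'$ with both $u\rightarrow w$ and $v\rightarrow w$ reversed: these are unconditionally equivalent with the same skeleton, the algorithm returns $v\rightarrow w$, yet $u\rightarrow w$ is reversed in $\D'$.) The conclusion survives by a shorter route that needs no path preservation: $m\in\an_\D(w)$ gives $m\not\perp_\D w$, hence $m\not\perp_{\D'}w$; since $w\rightarrow v\in E^{\D'}$ we have $\an_{\D'}(w)\subseteq\an_{\D'}(v)$, so $m\not\perp_{\D'}v$, hence $m\not\perp_\D v$, and $\an_\D(m)=\{m\}$ forces $m\in\an_\D(v)$.

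The genuine gap is the opposite inclusion, $\ma(\pa_\D(v))\subseteq\ma(\pa_\D(w)\setminus\{v\})$, which you explicitly leave open after noting that the penultimate vertex of your clique-cover path need not be a parent of $w$ in $\D$. The clique-cover detour via Lemma~\ref{lem:edge-clique} is not the missing idea; the minimality of $w$ in the ordering is, and it is exactly what the paper uses here. Take $v'\in\ma(\pa_\D(v))$ and suppose every directed path from $v'$ to $w$ in $\D$ passes through $v$. Because $w$ is minimal with an incoming edge of $\Delta(\D,\D')$, no edge of $\D$ pointing into a node preceding $w$ is reversed in $\D'$, so the directed paths from $v'$ down to $v$ survive intact in $\D'$; but $w\rightarrow v\in E^{\D'}$ turns $v$ into a collider on each of them, and no new trek between $v'$ and $w$ is created. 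Hence $v'\perp_{\D'}w$ while $v'\not\perp_\D w$, contradicting unconditional equivalence. So some directed path from $v'$ to $w$ avoids $v$, its last edge lands in $\pa_\D(w)\setminus\{v\}$, and $v'\in\ma(\pa_\D(w)\setminus\{v\})$. Without this step your proof establishes only half of the equality required for $v\rightarrow w$ to be weakly covered.
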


\begin{proof}
  There are two cases:
  either (i) \(\ma(\pa_\D(v)) = \emptyset\), or 
  (ii) there is some \(v' \in \ma(\pa_\D(v))\).
  In case (i), it must also be that \(\ma(\pa_{\D}(w) \setminus\{v\}) = \emptyset\), otherwise \(v\) would be \(d\)-separated from all other ancestors of \(w\)  given $\emptyset$ in $\D$, contradicting the fact that both \(\D'\) has edge \(w\rightarrow v\) (producing a $d$-connecting path between $v$ and the ancestors of $w$  given $\emptyset$ in \(\D'\)) and \(\D'\) is unconditionally equivalent to \(\D\) with the same skeleton.
  Hence, \(\ma(\pa_\D(v)) = \ma(\pa_\D(w) \setminus\{v\}) = \emptyset\) and \(v \not\in \an(\pa_\D(w)\setminus \{v\})\), so \(v \rightarrow w\) is weakly covered in \(\D\).
  In case (ii), for every \(v' \in \ma(\pa_\D(v))\), there must also be a directed path in $\D$ from \(v'\) to \(w\) that does not contain \(v\); otherwise, since \(w\) is minimal with respect to \(\tg\), the edge \(w\rightarrow v\) in \(\D'\) would make \(v\) a collider, and \(v'\) would be \(d\)-separated from \(w\) given $\emptyset$ in $\D^{\prime}$.
  Hence, \(\ma(\pa_\D(v)) \subseteq \ma(\pa_\D(w) \setminus\{v\})\).
  Furthermore, using the same argument as in case (i), there can be no additional maximal ancestors of \(w\), so \(\ma(\pa_\D(v)) \supseteq \ma(\pa_\D(w) \setminus\{v\})\).
  Finally, because \(v\) is maximal with respect to \(\tg\), and because \(\D'\) is acyclic and unconditionally equivalent to \(\D\) and with the same skeleton, \(v \not\in \an(\pa_{\D}(w)\setminus \{v\})\). 
  Hence \(v\rightarrow w\) is weakly covered in \(\D\).
\end{proof}

The next theorem generalizes the transformational characterization of Markov equivalence \citep{chickering1995transformational} to a transformational characterization of unconditional equivalence.

\begin{thrm}[Transformational Characterization]
\label{thm: transformational}
  Let $\D$ and $\D^\prime$ be two unconditionally equivalent DAGs.  
  There exists a sequence of $|E^{D^\prime}\setminus E^{D}|$ edge insertions, followed by $|\Delta(\H^{\D, \D'}, \H^{\D', \D})|$ edge reversals, followed by $|E^{D}\setminus E^{D^\prime}|$ edge deletions that transforms $\D$ into $\D^\prime$ with the following properties:
  \begin{enumerate}
  \item Each edge inserted or deleted in \(\D\) is partially weakly covered or implied by transitivity.
  \item Each edge reversed in \(\D\) is weakly covered.
  \item After each operation, the resulting \(\D\) is a DAG and \(\U^\D = \U^{\D'}\).
  \item After all operations, \(\D = \D'\).
  \end{enumerate}
\end{thrm}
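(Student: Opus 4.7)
The plan is to decompose the transformation $\D \to \D'$ into three stages anchored by the reference DAGs $\H^{\D,\D'}$ and $\H^{\D',\D}$ produced by Algorithm~\ref{alg:find_ref}. By construction these two DAGs share the same skeleton (the union of the skeletons of $\D$ and $\D'$), and by Lemma~\ref{lem: reference graph} both lie in the common UEC. The three stages are: (a) add edges to move from $\D$ to $\H^{\D,\D'}$; (b) reverse edges to move from $\H^{\D,\D'}$ to $\H^{\D',\D}$; (c) delete edges to move from $\H^{\D',\D}$ to $\D'$. The edge counts in the statement are accounted for because the skeleton difference between $\D$ and $\H^{\D,\D'}$ contributes exactly $|E^{D'}\setminus E^{D}|$ insertions, the orientation difference between the two reference DAGs contributes exactly $|\Delta(\H^{\D,\D'},\H^{\D',\D})|$ reversals, and the skeleton difference between $\H^{\D',\D}$ and $\D'$ contributes exactly $|E^{D}\setminus E^{D'}|$ deletions.

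For stage (a) I would invoke Lemma~\ref{lem: reference graph} together with its proof: the extra edges of $\H^{\D,\D'}$ relative to $\D$ can be added one at a time in an order consistent with $<^\D$, and that proof already shows that at each step the next edge to be added is partially weakly covered or implied by transitivity in the current graph. Lemma~\ref{lemma:add-edge} then certifies that each insertion produces a DAG in the same UEC. For stage (c) I would apply the symmetric argument with the roles of $\D$ and $\D'$ swapped: the edges of $\H^{\D',\D}$ absent from $\D'$ are partially weakly covered or implied by transitivity in $\D'$, and reversing the corresponding insertion sequence yields a sequence of deletions from $\H^{\D',\D}$ to $\D'$. Reading Lemma~\ref{lemma:add-edge} from right to left, each such deletion preserves DAG-hood and the UEC; the remaining point to check is that after each earlier deletion, the pair associated to the next deletion is still partially weakly covered or implied by transitivity in the updated graph, which follows from the same stability observation used in the proof of Lemma~\ref{lem: reference graph}.

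Stage (b) is the core of the argument and proceeds by induction on $|\Delta|$, mirroring the Chickering argument for Markov equivalence. Starting from the current DAG $\G$ (initially $\H^{\D,\D'}$), I apply Algorithm~\ref{alg:find_edge} to $(\G, \H^{\D',\D})$ to obtain an edge $v\rightarrow w \in \Delta(\G, \H^{\D',\D})$. By Lemma~\ref{lem:rev-edge} this edge is weakly covered in $\G$, so by Lemma~\ref{lemma:rev-edge} reversing it yields a DAG $\G'$ that is still unconditionally equivalent to $\H^{\D',\D}$ and that has the same skeleton as $\G$ (and hence as $\H^{\D',\D}$). Moreover $\Delta(\G',\H^{\D',\D}) = \Delta(\G,\H^{\D',\D}) \setminus \{v\rightarrow w\}$, so the symmetric difference strictly shrinks and the process terminates in exactly $|\Delta(\H^{\D,\D'},\H^{\D',\D})|$ reversals.

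I expect the main obstacle to be verifying that the hypotheses of Lemma~\ref{lem:rev-edge} continue to hold throughout stage (b): each intermediate graph must retain the skeleton of $\H^{\D',\D}$ and remain in the shared UEC, so that Algorithm~\ref{alg:find_edge} keeps returning an edge that is genuinely weakly covered. This invariant is delivered by Lemma~\ref{lemma:rev-edge}, but one must confirm that whenever $\G \neq \H^{\D',\D}$ a $<^\D$-minimal differently-oriented edge exists, which is immediate since $\Delta(\G, \H^{\D',\D}) \neq \emptyset$ and $<^\D$ is a total order. Concatenating the three stages then yields properties (1)--(4) of the theorem.
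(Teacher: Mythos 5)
Your proposal follows essentially the same route as the paper's proof: the same three-stage decomposition through the reference DAGs $\H^{\D,\D'}$ and $\H^{\D',\D}$ from Algorithm~\ref{alg:find_ref}, insertions justified by Lemma~\ref{lem: reference graph} and Lemma~\ref{lemma:add-edge}, deletions obtained by reversing the insertion sequence for $\D'$ using the same stability observation, and the reversal phase driven by Algorithm~\ref{alg:find_edge} via Lemmas~\ref{lem:rev-edge} and~\ref{lemma:rev-edge}, with $|\Delta|$ decreasing by one at each step. The argument and the accounting of the operation counts match the paper's proof.
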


\begin{proof}
Using Algorithm~\ref{alg:find_ref}, $\H^{\D,\D^\prime}$ is produced from $\D$ by adding exactly the edges in $\Gamma(\D,\H^{\D,\D^\prime})$, of which there are $|E^{D^\prime}\setminus E^{D}|$ in total. 
By Lemma~\ref{lem: reference graph}, each edge added in this phase is either partially weakly covered or implied by transitivity in $\D$. 
Similarly, $\H^{\D^\prime,\D}$ can be produced via Algorithm~\ref{alg:find_ref} by adding a sequence of $|E^{D}\setminus E^{D^\prime}|$ edges to $\D^\prime$ that are all either partially weakly covered or implied by transitivity in $\D^\prime$.
Moreover, as seen in the proof of Lemma~\ref{lem: reference graph}, if $\H$ is produced from $\H^\prime$ by adding $v\rightarrow w$ to $\H^\prime$ where $(v,w)$ is implied by transitivity or partially weakly covered in $\H^\prime$ then all other such pairs $(v,w)$ in $\H^\prime$ are still implied by transitivity or partially weakly covered in $\H$.  
Thus, reversing the sequence of edge additions used to produce $\H^{\D^\prime,\D}$ from $\D^\prime$, the edge removed from $\H^\prime$ to produce $\H$ is always partially weakly covered or implied by transitivity in $\H$.  
It further follows from Algorithm~\ref{alg:find_ref} that after each edge addition or removal, the resulting graph is a DAG.

Finally, consider $\H^{\D,\D^\prime}$ and $\H^{\D^\prime,\D}$.
By construction, their skeletons satisfy $R^{\D,\D^\prime} = R^{\D^\prime,\D}$.  
Starting with $\H^{\D,\D^\prime}$, by Lemma~\ref{lem:rev-edge}, Algorithm~\ref{alg:find_edge} identifies an edge in $\Delta(\H^{\D,\D^\prime},\H^{\D^\prime,\D})$ that is weakly covered, say $u\rightarrow v$.  
By Lemma~\ref{lem:rev-edge}, reversing this edge in $\H^{\D,\D^\prime}$ produces a DAG $\H$ for which $\Delta(\H,\H^{\D^\prime,\D}) = \Delta(\H^{\D,\D^\prime},\H^{\D^\prime,\D})\setminus\{u\rightarrow v\}$. 
Thus, after each reversal, the resulting graph is a DAG, and the cardinality of $\Delta(\H,\H^{\D^\prime,\D})$ is reduced by one.
\end{proof}

\subsection{Unconditional Equivalence for Essential Graphs}
\label{sec:uncond-equiv-cpdags}

An essential graph (CPDAG) is a chain graph that represents a Markov equivalence class of DAGs (see \cite{AMP97} for more background).
A given chain graph \(\G\) determines a preorder \((V^\G, \leq^\G)\) over its vertices (note that we omit the superscript \(^\G\) when it is clear from context), and \(v \leq w\) means either that \(v\) and \(w\) are identical or that there is an undirected or \emph{partially directed path} (i.e., a path with at least one directed edge and all directed edges pointing the same direction) from \(v\) to \(w\).
Nodes \(v\) and \(w\) are in the same \emph{chain component} 
if \(v \leq w\) and \(w \leq v\);
nodes in the same chain component must be connected by an undirected path;
and we denote the component of node \(v\) as \(\cc(v) \coloneqq \{w \in V^\G: v \leq w \text{ and } w \leq v\}\).
Generalizing the notions of ancestor and maximal ancestor in DAGs, the \emph{anterior} of a node \(v\) in a chain graph \(\G\) is defined as \(\at(v) \coloneqq \{w \in V^\G : w \leq v\}\), and we define the \emph{minimal anterior} as \(\mt(v) \coloneqq \{w \in \at(v) : \at(w) = \cc(w)\}\).
We use $[\G]$ to denote the MEC determined by $\G$, and we specify a DAG in this MEC using \(\D \in [\G]\).
In the following, we write $v\rightopen w$ to denote an edge that is either $v\rightarrow w$ or undirected, and \(\G_{- v \rightopen w}\) to denote the graph $\G$ with the edge $v\rightopen w$ removed. 

\begin{defn}
  An edge \(v \rightopen w\) of an essential graph \(\G\) is \emph{removable} when there exists a DAG \(\D \in [\G]\) such that the edge \(v \rightarrow w\in V^\D\) is either partially weakly covered or implied by transitivity in \(\D\).
\end{defn}

\begin{thrm}\label{thrm:essential}
  The edge \(v \rightopen w\) of an essential graph \(G\) is removable if and only if \\ \(\mt_{\G_{- v \rightopen w}}(v) \subseteq \mt_{\G_{- v \rightopen w}}(w)\).
\end{thrm}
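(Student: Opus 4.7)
My plan is to prove both directions by bridging the chain-graph quantity $\mt_{\G'}$ with the DAG-level quantity $\ma_{\D'}$, where $\G' \coloneqq \G_{-v\rightopen w}$ and $\D'$ is obtained from a suitable $\D \in [\G]$ by deleting the edge $v\to w$. A preliminary observation I would establish first is that for any DAG $\D'$ whose orientations are compatible with $\G'$ (i.e., directed edges of $\G'$ agree with $\D'$ and undirected edges of $\G'$ are oriented into a DAG in $\D'$), the set $\ma_{\D'}(u)$ is contained in $\mt_{\G'}(u)$ and selects exactly one node from each source chain component of $\G'$ anterior to $u$ in $\G'$. This is the key bridging lemma.

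For the forward direction ($\Rightarrow$), I take a witness $\D \in [\G]$ in which $v\to w$ is partially weakly covered or implied by transitivity and set $\D' \coloneqq \D \setminus \{v\to w\}$. By Lemma~\ref{lemma:add-edge}, $\D'$ is unconditionally equivalent to $\D$. In either case $\ma_{\D'}(v) \subseteq \ma_{\D'}(w)$: it is part of the definition in the partially weakly covered case, while in the transitivity case $v \in \an_{\D'}(w)$ gives $\an_{\D'}(v) \subseteq \an_{\D'}(w)$ and therefore $\ma_{\D'}(v) \subseteq \ma_{\D'}(w)$. To upgrade this to $\mt_{\G'}(v) \subseteq \mt_{\G'}(w)$ I would exploit the chordality of chain components: for any source chain component $C$ of $\G'$ anterior to $v$ in $\G'$, one can reorient $C$ to place the source of $C$ in $\D'$ at any chosen node of $C$, producing a new DAG that remains in $[\G]$ and shares the UEC. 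Running the inclusion argument across these orientations forces every node of $C$ to lie in $\mt_{\G'}(w)$, yielding the required chain-component level statement.

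For the backward direction ($\Leftarrow$), assuming $\mt_{\G'}(v) \subseteq \mt_{\G'}(w)$, I would construct an explicit witness $\D \in [\G]$. I pick a topological extension of the preorder determined by $\G$ and orient each undirected chain component accordingly, arranging that (i) the edge $v\rightopen w$ is oriented as $v\to w$ in $\D$, and (ii) for every source chain component of $\G'$ anterior to $v$ (hence also to $w$ by assumption), the resulting source in $\D' \coloneqq \D \setminus \{v\to w\}$ reaches both $v$ and $w$ in $\D'$. This yields $\ma_{\D'}(v) \subseteq \ma_{\D'}(w)$ and $w \not\in \an_{\D'}(v)$ by construction, so $v\to w$ is partially weakly covered in $\D$ when $v \not\in \an_{\D'}(w)$ and is implied by transitivity in $\D$ otherwise.

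The main obstacle will be the forward direction: a single witness $\D$ realizes only one specific source per source chain component of $\G'$, whereas $\mt_{\G'}$ is a statement about every node of every such component. Bridging this gap requires making precise the reorientation freedom inside each chain component of $\G$ (using chordality and the characterization of the MEC) and verifying that the UEC-level invariance of $\mt_{\G'}$ transfers the inclusion across all such orientations. Formalizing this chordality-based flexibility, together with the careful analysis of how partially directed paths in $\G'$ translate to directed paths in each reoriented $\D'$, is the delicate technical step.
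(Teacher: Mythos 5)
Your overall strategy---reducing the chain-graph condition to the DAG-level condition $\ma_{\D'}(v)\subseteq\ma_{\D'}(w)$ via a bridge between $\ma$ and $\mt$---is workable, but the step you yourself flag as the main obstacle in the forward direction is a genuine gap, and the repair you propose does not go through as stated. Removability is an \emph{existential} property: it gives you one witness $\D\in[\G]$ in which $v\rightarrow w$ is partially weakly covered or implied by transitivity. If you reorient a source chain component $C$ of $\G_{-v\rightopen w}$ to move its source to another node, you obtain a new DAG $\D_2\in[\G]$, but you have no right to ``run the inclusion argument'' in $\D_2$: the inclusion $\ma_{\D_2'}(v)\subseteq\ma_{\D_2'}(w)$ only follows if $v\rightarrow w$ is again partially weakly covered or implied by transitivity in $\D_2$, which is essentially what you are trying to prove---the argument is circular. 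The good news is that the entire reorientation step is unnecessary. Since $\mt_{\G'}(w)$ is a union of whole chain components (if $u\in\mt_{\G'}(w)$ and $u'\in\cc(u)$ then $u'\leq u\leq w$ and $\at(u')=\cc(u')$), and since $C\subseteq\at_{\G'}(w)$ already follows once a \emph{single} node of $C$ is anterior to $w$, the one source per component supplied by your single witness $\D'$ suffices: $s_C\in\ma_{\D'}(v)\subseteq\ma_{\D'}(w)$ gives $s_C\in\an_{\D'}(w)$, hence $s_C\leq w$ in $\G_{-v\rightopen w}$, hence all of $C$ lies in $\mt_{\G'}(w)$. You should replace the reorientation argument with this observation; what remains to check is only that every source chain component anterior to $v$ in $\G_{-v\rightopen w}$ actually contributes its source to $\ma_{\D'}(v)$ (a lifting of partially directed paths to directed paths, which works because the deleted edge points out of $v$ and the essential-graph property gives $\pa_\G(u)=\pa_\G(\cc(u))$).

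For the backward direction, the existence of the orientation you describe is asserted rather than proved, and this is where the paper does its real work. In particular, when $w\leq v$ in $\G_{-v\rightopen w}$ (so $v\mathdash w$ lies inside a chain component), no orientation makes ``every source reach both $v$ and $w$ after deleting the edge'' in the way you describe; the paper instead uses chordality of the chain component to find a common neighbour $z$ and runs Max Cardinality Search from $v$ to realize $v\rightarrow z\rightarrow w$, so that the edge is implied by transitivity. In the remaining case $w\not\leq v$, the paper argues directly on the partially directed treks between $v$ and $w$ and how they orient in a member of $[\G]$, whereas you route through $\ma_{\D'}$; both can be made to work, but you must verify that the partially directed paths from each anterior source component to $w$ in $\G_{-v\rightopen w}$ lift to directed paths in $\D_{-v\rightarrow w}$ \emph{avoiding} the deleted edge (your lift can collapse onto $v\rightarrow w$ when $v$ is a parent of $\cc(w)$), and that $\pa_{\D'}(v)\neq\emptyset$ in the partially weakly covered case (this does follow: if $v$ were a source of $\D'$ then $\ma_{\D'}(v)=\{v\}\subseteq\ma_{\D'}(w)$ would force $v\in\an_{\D'}(w)$, i.e., the transitivity case). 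Until these constructions are carried out, the proposal is an outline with the key technical content missing.
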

\begin{proof}
  Assume that \(\mt(v)_{\G_{- v \rightopen w}} \subseteq \mt(w)_{\G_{- v \rightopen w}}\).
  There are two cases: either
  (i) \(w \leq v\) or 
  (ii) \(w \not\leq v\) in \(\G_{- v \rightopen w}\).

  In case (i), \(w \leq v\) implies there is either an undirected or partially directed path from \(w\) to \(v\) in \(\G_{- v \rightopen w}\).
  Such a partially directed path in \(\G_{- v \rightopen w}\) would mean \(\G\) (which contains an edge \(v \rightopen w\)) contains a partially directed cycle contradicting that \(\G\) is a CPDAG, so the path in \(\G_{- v \rightopen w}\) must be undirected, and furthermore \(\G\) must contain the edge \(v \mathdash w\), with \(v\) and \(w\) being in the same chain component.
 Because the induced subgraph over \(\cc_\G(\{v, w\})\) is chordal, there must be a node \(z\) such that the undirected path \(p_z = \langle v, z, w \rangle\) is in \(\G\).
  Recall that any DAG obtained by orienting the undirected edges of a CPDAG according to a perfect ordering, which can be obtained via the Max Cardinality Search (MCS) algorithm \cite[Algorithm 9.3]{koller2009probabilistic}, will be Markov equivalent to that CPDAG.
  Observe that we can start MCS at node \(v\) and then get \(z\) followed by \(w\), implying there must be a DAG \(\mathcal{D} \in [\G]\) in which \(p_z\) becomes the the directed path \(v \rightarrow z \rightarrow w\), rendering \(v \rightarrow w\) implied by transitivity in \(\mathcal{D}\).
  Thus, \(v \rightopen w\) is removable in \(\G\).

  In case (ii), \(\mt_{\G_{- v \rightopen w}}(v) \subseteq \mt_{\G_{- v \rightopen w}}(w)\) implies that \(v\) and \(w\) are connected by at least one partially directed trek.
  Consider the set of all induced partially directed treks between \(v\) and \(w\).
  Note that \(w \not\leq v\) prohibits any of these treks from being partially directed from \(w\) to \(v\).
   For any DAG \(\mathcal{D} \in [\G]\), each of these treks becomes fully directed as either (ii.a) \(v \rightarrow \cdots \rightarrow w\) or (ii.b) \(v \leftarrow \cdots \rightarrow w\).
   If any of these treks are of the form (ii.a), then \(v \rightarrow w\) is implied by transitivity in \(D\) and thus removable in \(\G\), otherwise all of these treks are of the form (ii.b) and thus \(v \rightarrow w\) is partially weakly covered in \(\D\) and removable in \(\G\).
  
  Now, instead assume that \(v \rightopen w\) is removable in \(\G\).
  We will show that \(\mt_{\G_{- v \rightopen w}}(v) \subseteq \mt_{\G_{- v \rightopen w}}(w)\).
  There are three cases:
  (i) \(v \rightopen w\) is non-essential (so it must be \(v \mathdash w\)) and both nodes have zero in-degree (i.e., no directed edges pointing to them),
  (ii) \(v \mathdash w\) but at least one of the nodes has nonzero in-degree, or
  (iii) the edge is essential, i.e., \(v \rightarrow w\).
  
  In case (i), because \(v \mathdash w\) is removable in \(\G\), there must exist some \(\D \in [\G]\) such that \(\D_{- v \openopen w}\) (we use \(\openopen\) to signify an edge in a DAG when the orientation is unknown) contains a trek between \(v\) and \(w\).
  Because both nodes have in-degree of 0 in \(\G\), this trek in \(\D\) must become an undirected path in \(\G_{- v \rightopen w}\), and thus \(\mt_{\G_{- v \rightopen w}}(v) = \mt_{\G_{- v \rightopen w}}(w) = \mt_\G(w)\).

  In case (ii), because \(\G\) is a CPDAG, it contains no induced subgraphs of the form \(p \rightarrow v \mathdash w\) (likewise with \(v\) and \(w\) swapped) \citep[Theorem 4.1]{AMP97}, and so \(\pa_\G(v) = \pa_\G(w) = \pa_\G(\cc_\G(\{v, w\}))\).
  Thus, \(\mt_{\G_{- v \rightopen w}}(v) = \mt_{\G_{- v \rightopen w}}(w) = \mt_\G(w)\).

  Finally, in case (iii), because \(v \rightopen w\) is removable in \(\G\), there must exist some \(\D \in [\G]\) such that \(\ma_{\D_{- v \rightarrow w}}(v) \subseteq \ma_{\D_{- v \rightarrow w}}(w)\) (either because \(v\rightarrow w\) is implied by transitivity or is partially weakly covered in \(\D\)).
  In other words, for every \(v' \in \pa_\D(v)\) either also \(v' \in \pa_{\D_{- v \rightarrow w}}(w)\) or there is a trek between \(v'\) and some \(w' \in \pa_{\D_{- v \rightarrow w}}(w)\), i.e., we have the trek \(v \leftarrow v' \leftarrow \cdots \leftarrow t \rightarrow \cdots \rightarrow w' \rightarrow w\) (with \(v', t,\) and \(w'\) not necessarily distinct).
  For each such induced trek in \(\D\), consider the path along those same nodes in \(\G\).
  Since \(\D \in [\G]\), edges along the path in \(\G\) either remain the same as in the trek in \(\D\) or become undirected, without the possibility for some \(\D' \in [\G]\) to have a collider along the induced path over the same nodes, i.e., each of these induced treks in \(\D\) becomes a partially directed trek in \(\G\) of the form \(v \cdots w' \rightarrow w\) or \(v \rightarrow w' \mathdash w\), and we have that \(\pa_\G(\cc(v)) = \pa_\G(v) \subseteq \pa_\D(v)\).
  Thus, \(\mt_{\G_{- v \rightopen w}}(v) \subseteq \mt_{\G_{- v \rightopen w}}(w') \subseteq \mt_{\G_{- v \rightopen w}}(w)\), completing the proof.
\end{proof}
The result \(\G_{- v \rightopen w}\) of removing an edge from a CPDAG \(\G\) is not necessarily itself a CPDAG but instead is a PDAG with possibly multiple \textit{completions} (orientations) into different CPDAGs.
The following Theorem~\ref{thrm:completions} provides a criterion for testing if the PDAG is complete and a method for constructing all \textit{width-1} (unconditionally equivalent) completions of the PDAG otherwise, i.e., all completions \(\G'\) such that there exists a \(\D \in [\G]\) for which \(\D_{- v \openopen w} \in [\G']\) and \(\U^\D = \U^{\D'}\).

\begin{defn}
  An edge \(v \rightarrow w\) in a chain graph \(\G\) is called a (strong) \emph{protector} if it renders another arrow strongly protected, i.e., if it occurs in one of the four following configurations as an induced subgraph:
  \vspace{-2em}
  \tikzstyle{VertexStyle} = []
  \tikzstyle{EdgeStyle} = [->,>=stealth']
  \SetGraphUnit{2.2}
  \begin{figure}[h]
    \begin{subfigure}{.245\textwidth}
      \centering
      \begin{tikzpicture}[scale=.4]
        \node at (-2,0) {\textit{(1)}};
        \begin{scope}[execute at begin node=$, execute at end node=$]
          \Vertex{v} \EA(v){w} \SO(v){a}
          \Edges(v, w, a)
        \end{scope}
      \end{tikzpicture}  
    \end{subfigure}
    \begin{subfigure}{.245\textwidth}
      \centering
      \begin{tikzpicture}[scale=.4]
        \node at (-2,0) {\textit{(2)}};
        \begin{scope}[execute at begin node=$, execute at end node=$]
          \Vertex{v} \EA(v){w} \SO(v){b}
          \Edges(v, w) \Edges(b, w)
        \end{scope}
      \end{tikzpicture}  
    \end{subfigure}
    \begin{subfigure}{.245\textwidth}
      \centering
      \begin{tikzpicture}[scale=.4]
        \node at (-2,0) {\textit{(3)}};
        \begin{scope}[execute at begin node=$, execute at end node=$]
          \Vertex{v} \EA(v){w} \SO(v){c}
          \Edges(c, v, w) \Edges(c, w)
        \end{scope}
      \end{tikzpicture}  
    \end{subfigure}
    \begin{subfigure}{.245\textwidth}
      \centering
      \begin{tikzpicture}[scale=.4]
        \node at (-2,0) {\textit{(4)}};
        \begin{scope}[execute at begin node=$, execute at end node=$]
          \Vertex{d_1} \EA(d_1){w} \NO(d_1){v} \SO(d_1){d_2}
          \Edges(v, w) \Edges(d_1, w) \Edges(d_2, w)
          \tikzstyle{EdgeStyle} = []
          \Edges(v, d_1, d_2)
        \end{scope}
      \end{tikzpicture}  
    \end{subfigure}

    \label{fig:sole-protector}
  \end{figure}
  \newline
  Further, we say \(v \rightarrow w\) is a protector of a specific set of edges, e.g., it is a protector of \(\{w \rightarrow a\}\) in configuration (1) and a protector of \(\{d_1 \rightarrow w,\ d_2 \rightarrow w\}\) in configuration (4).
  Finally, \(v \rightarrow w\) is a \emph{sole protector} if at least one edge it protects has no other protector. 
\end{defn}

In the following, we let \(\ngh_\G(v)\) denote the open neighborhood of \(v\) in \(\G\) and  $\G_S$ denote the induced subgraph of $\G$ on vertices $S\subseteq V^\G$. 
A partially directed path $\langle u_0 = v,\ldots, u_m = w\rangle$ is called a \emph{leading trek} if $u_0\rightarrow u_1$ and all other edges on the path are undirected. 

\begin{thrm}\label{thrm:completions}
  Given an essential graph \(\G\) and removable edge \(v\rightopen w\), define \(T \coloneqq \ngh_\G (v) \cap \ngh_{\G_{\cc (w)}} (w)\).
  For the PDAG \(\G_{- v \rightopen w}\), we have the following:
  \begin{enumerate}
  \item if \(v \rightarrow w\) in \(\G\): the PDAG \(\G_{- v \rightarrow w}\) is a width-1 completion of \(\G\) if and only if \(|T| = 0\) and \(v \rightarrow w\) is not a sole protector;
  and its number of width-1 completions is \(C = \sum_{i=1}^m (2^{|K_i|}-1)\) if $\G_{- v\rightarrow w}$ contains only leading treks from $v$ to $w$, and \(C+1\) otherwise, where $K_1,\ldots, K_m$ are the maximal cliques in $\G_T$.
  \item if \(v \mathdash w\) in \(\G\): the PDAG \(\G_{- v \mathdash w}\) is a width-1 completion of \(\G\) if and only if \(|T| \leq 1\);
    and it has \(|T| + \mathbf{1}_{\mathrm{indegree}(v)}\) width-1 completions otherwise.
 \end{enumerate}
\end{thrm}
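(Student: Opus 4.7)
The plan is to handle the two cases separately. Each case requires (a) characterizing when the PDAG $\G_{-v\rightopen w}$ is itself an essential graph (and hence its own unique width-1 completion), and (b) counting width-1 completions when it is not. Throughout, I would exploit the strong-protection characterization of essential graphs \citep{AMP97}, chordality of chain components, and Theorem \ref{thrm:essential} to ensure that width-1 completions correspond exactly to re-orientations arising from partially weakly covered pairs or transitively implied edges.

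For Case 1 (directed edge $v\rightarrow w$), I would first show that $\G_{-v\rightarrow w}$ is itself a CPDAG exactly when two obstructions are absent: (i) $v\rightarrow w$ is not the sole protector of any other arrow, so no essential arrow is left unprotected by its removal; and (ii) $|T|=0$, so no triangle configuration $v\rightarrow t \mathdash w$ (with $t \in \cc(w)$) survives that would permit reorientation. Walking through the four strong-protection configurations, each induced subgraph of $\G_{-v\rightarrow w}$ either remains strongly protected or becomes genuinely non-essential. For the count when $|T|>0$, each $t\in T$ offers the option of orienting $t\rightarrow w$ as a new protector; the Meek rules propagate this choice, and the bijection with non-empty subsets of maximal cliques in $\G_T$ follows from the clique structure of the chordal chain component $\cc(w)$ together with the fact that propagated orientations coincide exactly when the chosen $T$-subsets lie in a common maximal clique. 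The extra $+1$ in the non-leading-trek case corresponds to $\G_{-v\rightarrow w}$ itself being a valid additional completion; in the leading-trek-only case this completion is absorbed into the clique count because a leading trek forces the lead arrow to be reoriented as part of the enumeration.

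For Case 2 (undirected edge $v\mathdash w$), the nodes $v$ and $w$ share a chain component, and $T$ consists of their common neighbors within that component, each yielding a triangle with $v, w$ via undirected edges. Chordality ensures the chain component minus $v\mathdash w$ remains chordal precisely when $|T|\le 1$, which yields the first characterization. When $|T|\ge 2$, removing $v\mathdash w$ creates a chordless $4$-cycle that must be resolved by orienting some edges; each $t\in T$ corresponds to a distinct completion obtained by pivoting around $t$, and $\mathbf{1}_{\mathrm{indegree}(v)}$ contributes one more completion when $v$ already has an incoming arrow, which permits orienting the undirected structure consistently without producing a new v-structure.

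The main obstacle will be verifying the combinatorial count in Case 1: establishing the bijection between width-1 completions and non-empty subsets of the maximal cliques of $\G_T$ requires a careful Meek-rule propagation argument showing that distinct subsets within the same maximal clique produce distinct completions, that subsets across different maximal cliques do not collide, and that the leading-trek versus non-leading-trek dichotomy correctly accounts for whether $\G_{-v\rightarrow w}$ itself contributes an additional completion or is already enumerated by the clique sum.
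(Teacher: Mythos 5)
Your overall strategy matches the paper's: strong protection plus the triangle obstruction for the if-and-only-if in case 1, chordality of the chain component for case 2, and a count of completions indexed by how the edges between $w$ and $T$ get oriented. But two of the counting claims rest on incorrect ideas. First, in case 1 the extra $+1$ is not ``$\G_{- v \rightarrow w}$ itself being a valid additional completion'': when $|T|>0$ the PDAG $\G_{- v \rightarrow w}$ is not a CPDAG (that is exactly what the first half of case 1 establishes), so it cannot serve as one of the completions. The paper's $+1$ is the configuration in which \emph{every} $t\in T$ becomes a collider $v\rightarrow t\leftarrow w$, realized by starting the MCS ordering of $\cc(w)$ at $w$. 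This configuration is a width-1 completion precisely when $v$ and $w$ stay unconditionally dependent after the deletion; if every partially directed trek from $v$ to $w$ in $\G_{- v \rightarrow w}$ is a leading trek, orienting all of $\cc(w)$ away from $w$ turns the first node of each such trek into a collider, $d$-separates $v$ from $w$ given $\emptyset$, and thus changes the UEC --- that is why the leading-trek case gives $C$ rather than $C+1$. Your ``absorbed into the clique count'' explanation does not reflect this and would not survive scrutiny.

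Second, in case 2 you assert that each $t\in T$ ``corresponds to a distinct completion obtained by pivoting around $t$,'' but you give no argument that these exhaust the possibilities, and your reading of the indicator term is backwards. The needed step (and the heart of the paper's argument) is a lower bound: in any DAG $\D_{- v \openopen w}$ with $\D\in[\G]$, at least $|T|-1$ of the nodes of $T$ must form new colliders $v\rightarrow t\leftarrow w$, because otherwise some set $\{v,p_1,w,p_2\}$ with $p_1,p_2\in T$ yields an induced $4$-cycle with no v-structure, contradicting acyclicity. The $|T|$ completions come from choosing which single $t$ is \emph{not} a collider, and the $+\mathbf{1}_{\mathrm{indegree}(v)}$ completion is the one in which all $|T|$ nodes \emph{are} colliders --- it produces the maximal number of new v-structures, not none, and is admissible only because a parent of $v$ keeps $v$ and $w$ $d$-connected given $\emptyset$. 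Finally, in both cases you need a realizability argument (the paper uses MCS orderings of the chordal component) showing that each claimed orientation pattern actually arises from some $\D\in[\G]$ in which $v \rightopen w$ is removable; Meek-rule propagation alone does not supply this.
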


\begin{proof}
  For characterization 1, assume that \(\G_{- v \rightarrow w}\) is a width-1 completion of \(\G\).
  Notice that if there is a \(w'\) in \(\G\) such that \(w' \mathdash w\), then the induced subgraph \(v \rightarrow w' \mathdash w\) would be in \(\G_{- v \rightarrow w}\), contradicting that it is complete.
  Hence, there is no such \(w'\) and thus we have that \(\cc_{\G_{- v \rightarrow w}}(w)\) is empty and \(|T| = 0\).
Also, because \(\G_{- v \rightarrow w}\) is a CPDAG, all its directed edges are strongly protected, hence \(v \rightarrow w\) cannot be a sole protector in \(\G\).

  Conversely, assume that \(v \rightarrow w\) is a sole protector and \(|T| = 0\).
  Then, \(\G_{- v \rightarrow w}\) has at least one directed edge that is not strongly protected, so it is not complete.
  Furthermore, using \cite[Construction Algorithm]{AMP97} to iteratively undirect the resulting edges that become no longer solely protected produces exactly one width-1 completion.

  Alternatively, assume that $|T| > 0$. 
  Since $v\rightarrow w$ is removable in $\G$ there exists a DAG $\D\in[\G]$ such that $v\rightarrow w$ is removable in $\D$. 
  In fact, since any DAG in $[\G]$ in which $\D_{\cc(w)}$ is oriented with $w$ not a source node has $v\rightarrow w$ implied by transitivity in $\D$ then $v\rightarrow w$ is removable in all such DAGs. 
  Removing $v\rightarrow w$ from such a DAG $\D$ produces a DAG $\D_{-v\rightarrow w}$ that can only have v-structure $v\rightarrow t \leftarrow w$ not already in $\D$ where $t\in \ngh_\G(v)\cap \ngh_\G(w)$. 
  Since $\G$ is a chain graph it follows that either (i) $v\mathdash t \rightarrow w$ in $\G$, (ii) $v\rightarrow t\leftarrow w$ in $\G$ or (iii) $t\in T$. 
  Hence, the only v-structures in $\D_{-v\rightarrow w}$ not in $\D$ are given by case (ii) or case (iii).
  Those from the former case appear in all such DAGs, and hence only those from case (iii) distinguish different width-1 completions. 
  The number of ways to produce the v-structures in case (iii) correspond to the possible ways of orienting any non-empty subset of nodes in a maximal clique of a single connected component of $\G_{T}$ toward $w$ in $\G_{- v\rightarrow w}$ and all other edges adjacent to $w$ in $\G_{T\cup\{w\}}$ away from $w$.  
  By the MCS algorithm, any such configuration of v-structures is realized by some $\D_{-v\rightarrow w}$ for $\D\in[\G]$ with $w$ not a source node in $\D_{\cc(w)}$.

  Hence, if the only partially directed treks between $v$ and $w$ in $\G_{-v\rightarrow w}$ are leading treks from $v$ to $w$, then there are $C$ width-1 completions. 
  Otherwise, since \(v \rightarrow w\) is removable in \(\G\), we have that \(\mt_{\G_{- v \rightarrow w}}(v) \subseteq \mt_{\G_{- v \rightarrow w}}(w)\) and in \(\G_{- v \rightarrow w}\) either there are partially directed treks \(v \leftarrow \cdots a \cdots \rightarrow w\) for every \(a \in \mt(v)\) or there is at least one partially directed path \(v \mathdash \cdots \rightarrow w\).
  In either case, all DAGs in $[\G]$ have $v\rightarrow w$ removable.  
  Hence, the number of ways to produce v-structures in case (iii) correspond to all aforementioned placements of v-structures plus the configuration in which $v\rightarrow t \leftarrow w$ is a v-structure in $\G_{- v\rightarrow w}$ for all $t\in T$.
  Such a configuration is realized by initializing the MCS algorithm at $w$.
  Hence, we get $C+1$ width-1 completions, which concludes the proof of characterization 1.
 
  For characterization 2, assume that \(\G_{- v \mathdash w}\) is a width-1 completion of \(\G\).
  Then \(\cc_{\G_{- v \mathdash w}}(w)\) must be chordal, and so \(v \mathdash w\) must not be a chord in \(\G\).
  Hence, there must be less than two nodes in the intersection of \(v\)'s neighbors with \(w\)'s neighbors within the chain component, i.e., it must be that \(|T| \leq 1\) to avoid a chordless 4-cycle.
  
  Conversely, assume that \(|T| > 1\).
  For an arbitrary DAG \(\D \in [\G]\) with \(v \openopen w\) removable, consider \(\D_{- v \openopen w}\) which is Markov to \(\G_{- v \mathdash w}\).
  Note that \(\D\) has no v-structures in the induced subgraph over $\cc_\G(w)$ and that $\D_{- v \openopen w}$ contains a v-structure not in $\D$ if and only if it is of the form $v\rightarrow i \leftarrow w$ for some $i\in T$.
  Because $D_{- v \openopen w}$ is a DAG, every induced cycle in it must contain at least one v-structure. 
  Since we only removed the edge $v \openopen w$ from \(\D\), the only induced cycles of length at least $4$ in $D_{- v \openopen w}$ not in $D$ are $4$-cycles on vertex sets $\{v,w,p_1,p_2\}$ for $p_1,p_2\in T$.  
  If the number of $p\in T$ for which $v\rightarrow p\leftarrow w$ is a v-structure in $\D_{- v\openopen w}$ is less than $|T|-1$, then $D_{-v\openopen w}$ contains an induced $4$-cycle with no v-structure, which contradicts the fact that it is a DAG. 
  If the number of $v$-structures is $|T|$ and the indegree of \(v\) in \(\G\) is 0, then $v$ and $w$ are unconditionally independent in $\D_{-(v-w)}$, contradicting the assumption that $v \openopen w$ is removable in $\D$. 
  Hence, when \(\mathrm{indegree}_\G(v) = 0\), $\D_{- v \openopen w}$ must contain exactly $|T|-1$ v-structures of the form $v\rightarrow i \leftarrow w$ for $i\in T$, and these are all the v-structures in $\D_{-(v-w)}$ that are not in $\D$---and there are \(|T|\) ways of choosing these \(|T| -1\) v-structures, resulting in \(|T|\) possible width-1 completions of the PDAG when \(\mathrm{indegree}_\G(v) = 0\).
  Further, when \(\mathrm{indegree}_\G(v) > 0\), it is possible for \(\D_{- v \openopen w}\) to have \(|T|\) such v-structures, since \(v\) and \(w\) will remain \(d\)-connected by their common parent, in which case there will be \(|T| + 1\) possible width-1 completions of the PDAG, concluding the proof.
\end{proof}

\section{Discussion}
\label{sec:discussion}

Our study of unconditional equivalence has shown how comparatively simple undirected graphs can be informative of the underlying probabilistic or causal structure represented by DAGs.
Our transformational characterization of UECs (Theorem~\ref{thm: transformational}) generalizes that for MECs of DAGs \citep{chickering1995transformational} while also yielding a collection of moves analogous to those described in Meek's Conjecture, which were subsequently used to develop GES \citep{chickering2002optimal}.
Furthermore, the essential graph characterizations given in Theorems~\ref{thrm:essential} and~\ref{thrm:completions} allow for a more efficient traversal of the space of unconditionally equivalent MECs.

In an extended version of this paper, we will present a hybrid causal discovery algorithm that will first estimate a UEC and then search over it using the transformational moves from Theorem~\ref{thm: transformational}.  
Error propagation due to CI testing observed in classic hybrid methods is avoided with UEC estimation via independent pairwise independence tests. 
Other potential future work includes an extension of these results to characterizations of DAGs encoding the same CI relations with conditioning sets of size 0 or 1.  
Such results could be used to learn the 0-1 graphs studied by \citet{wille2006low}, who showed these models can be useful for estimating causal information in the small sample regime. 
Also of interest would be extensions of these results to ancestral graphs, as UECs are of particular use when modeling in the presence of latent confounders \citep{markham2022}.

\acks{Danai Deligeorgaki, Alex Markham, and Liam Solus were partially supported by the Wallenberg Autonomous Systems and Software Program (WASP) funded by the Knut and Alice Wallenberg Foundation.
  Pratik Misra was partially supported by the Brummer \& Partners MathDataLab.
  Liam Solus was partially supported the G\"oran Gustafsson Stiftelse and Starting Grant No.~2019-05195 from The Swedish Research Council.}

\bibliography{references}
\end{document}